\documentclass[format=acmsmall, review=false]{acmart}

\usepackage{acm-ec-26}
\usepackage{booktabs} % For formal tables
\usepackage[ruled]{algorithm2e} % For algorithms

\SetAlFnt{\small}
\SetAlCapFnt{\small}
\SetAlCapNameFnt{\small}
\SetAlCapHSkip{0pt}
\IncMargin{-\parindent}

\usepackage{longtable}
\usepackage{newtxtext,newtxmath}
\usepackage{hyperref}
\usepackage{url}
\usepackage{booktabs}
\usepackage{amsfonts}
\usepackage{amsmath}
\usepackage{nicefrac}
\usepackage{microtype}
\usepackage{graphicx}

\usepackage{algorithmic}
\usepackage{xcolor}
\usepackage{tcolorbox}
\usepackage{tikz}
\usepackage{tabularx}
\usepackage{ragged2e}
\usepackage{enumerate}
\usetikzlibrary{shapes, arrows.meta, positioning}

\newcommand{\norm}[1]{\left\lVert #1 \right\rVert}

\newcommand{\E}{\mathbb{E}}

\DeclareMathOperator*{\argmin}{argmin}
\DeclareMathOperator*{\argmax}{argmax}
\newcommand{\Var}{\operatorname{Var}}

\theoremstyle{definition}
\newtheorem{assumption}{Assumption}
\theoremstyle{plain}
\newtheorem{theorem}{Theorem}

\tcbuselibrary{skins,breakable}
\newtcolorbox{aspectbox}[1][]{
  colback=blue!5!white,
  colframe=blue!75!black,
  fonttitle=\bfseries,
  title={Aspect Example},
  #1
}
\setcitestyle{authoryear}

\title{When Should Humans Step In? Optimal Human Dispatching in AI-Assisted Decisions}

\author{Lezhi Tan}
\authornote{Both authors contributed equally to this research.}
\affiliation{
  \institution{Stanford University}
  \city{Stanford}
  \country{USA}
}
\email{carrie13@stanford.edu}

\author{Naomi Sagan}
\authornotemark[1]
\affiliation{
  \institution{Stanford University}
  \city{Stanford}
  \country{USA}
}
\email{nsagan@stanford.edu}

\author{Lihua Lei}
\affiliation{
  \institution{Stanford University}
  \city{Stanford}
  \country{USA}
}
\email{lihualei@stanford.edu}

\author{Jos\'e Blanchet}
\affiliation{
  \institution{Stanford University}
  \city{Stanford}
  \country{USA}
}
\email{jose.blanchet@stanford.edu}

\begin{abstract}
AI systems increasingly assist human decision making by producing preliminary assessments of complex inputs. However, such AI-generated assessments can often be noisy or systematically biased, raising a central question: how should costly human effort be allocated to correct AI outputs where it matters the most for the final decision?

We propose a general decision-theoretic framework for human-AI collaboration in which AI assessments are treated as factor-level signals and human judgments as costly information that can be selectively acquired. We consider cases where the optimal selection problem reduces to maximizing a reward associated with each candidate subset of factors, and turn policy design into reward estimation. We develop estimation procedures under both nonparametric and linear models, covering contextual and non-contextual selection rules. In the linear setting, the optimal rule admits a closed-form expression with a clear interpretation in terms of factor importance and residual variance.

We apply our framework to AI-assisted peer review. Our approach substantially outperforms LLM-only predictions and achieves performance comparable to full human review while using only 20–30\% of the human information. Across different selection rules, we find that simpler rules derived under linear models can significantly reduce computational cost without harming final prediction performance. Our results highlight both the value of human intervention and the efficiency of principled dispatching. 
\end{abstract}

\begin{document}
% \begin{titlepage}

\maketitle

\begingroup
\renewcommand\thefootnote{}\footnotetext{J. Blanchet gratefully acknowledges support from DoD through the grants Air Force Office of Scientific Research under award number FA9550-20-1-0397 and ONR 1398311. Support from NSF via grants 2229012, 2312204, 2403007 is gratefully acknowledged. L.Lei is grateful for the support of National Science Foundation grant DMS-2338464. N. Sagan is supported by Stanford Graduate Fellowship (SGF) for Sciences and Engineering, and 
National Science Foundation Graduate Research Fellowship Program under Grant No. DGE-2146755.}
\endgroup

% Optionally include a table of contents
% \newpage
% \setcounter{tocdepth}{2} % adjust to 1 if desired
% \tableofcontents

% \end{titlepage}

\section{Introduction}

Many real-world decisions rely on interpreting complex, high-dimensional inputs. Examples include evaluating a legal document, reviewing a medical record, screening a financial report, and assessing a scientific manuscript. In each case, the decision maker examines a long and technical object and distills it into a small number of interpretable factors that support a final judgment. In scientific peer review, for instance, review guidelines explicitly specify such factors, including novelty, clarity, and soundness.

Traditionally, this process is carried out entirely by human experts and proceeds in two stages. In the first stage, reviewers read the full document $W$ and extract factor-level assessments $X_1(W), \ldots, X_J(W)$ corresponding to the prescribed evaluation criteria. In the second stage, these factors are aggregated---often implicitly and without a formal decision rule---into a final outcome $Y$, such as an overall score or accept/reject decision, either by a single reviewer or by combining inputs from multiple experts. While reliable, this workflow is expensive, slow, and difficult to scale, as it requires substantial human labor to extract every factor for every instance.

A common practice in statistical learning \citep{hastie2009elements} is to learn a predictive model that maps the factor-level information $(X_1, \ldots, X_J)$ to the final outcome $Y$. Such models can then be applied to new instances by first collecting all factor values and computing a prediction. However, this practice conceals a major bottleneck: obtaining the full set of factor-level inputs $X_j(W)$ is itself costly, often accounting for the majority of human effort in the decision-making process.

Recent advances in AI systems have begun to reshape this pipeline. These systems are able to process complex, unstructured inputs and produce structured, factor-level evaluations at a fraction of the cost of human review. In scientific peer review, a prominent example is the use of large language models (LLMs) to generate detailed assessments directly from the manuscript text. This creates an opportunity to rethink the traditional workflow: instead of asking humans to generate all supporting reasons from scratch, AI-generated signals can serve as an initial scaffold that humans refine.

At the same time, AI-generated evaluations are imperfect, and their widespread informal use has raised growing concerns. In peer review settings, for example, LLMs are often systematically over-optimistic, assigning inflated scores or failing to detect subtle technical flaws (see our literature review below). Nonetheless, human reviewers increasingly rely on LLMs to summarize submissions, draft reviews, and even shape final recommendations, often without clear delineation between human judgment and machine output. More broadly, AI signals may be noisy, biased, or unreliable on specific factors. Because human evaluations are costly, it is often impractical to verify every aspect manually. This raises a central question: how should costly human effort be dispatched to the factors where human judgment is most needed, so as to retain decision quality while reducing the overall burden of review?

\begin{figure}[t]
    \centering
    \includegraphics[width=0.9\linewidth]{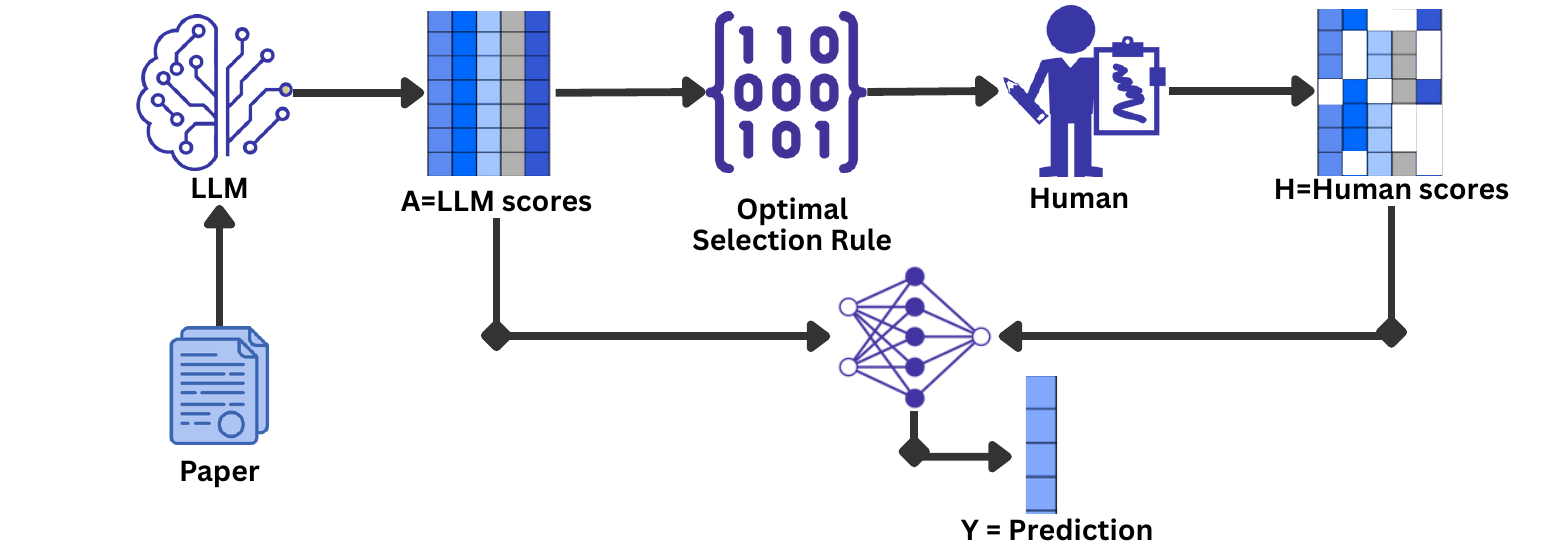}
    \caption{High-level diagram of our methodology in the AI-assisted peer review setting. 
First, an LLM generates a set of feature estimates $A$ for a paper. 
Next, a selection policy determines which subset of aspects to query from a human reviewer. 
Finally, the selected human inputs, together with the LLM features, are used to fit a prediction model for the target outcome.
}
    \label{fig:method_diagram}
\end{figure}

In this paper, we study a general framework for human–AI collaborative decision making based on this dispatching perspective. We consider settings in which an AI system produces preliminary predictions over a collection of decision-relevant factors, and human effort is selectively allocated to correct a subset of these factors. Rather than treating human involvement as an all-or-nothing choice, we model it as a resource that must be strategically dispatched across factors. The goal is to design a selection rule that determines which factors to query—based on the observed AI signals—so as to minimize downstream decision error.

We formalize this problem as an information acquisition task under squared error loss. Our analysis considers predictive models that are expressive enough to capture the relationship between the decision outcome and the available AI and human signals. Under this assumption, we characterize the optimal selection rule and establish statistical learning rates of convergence for its empirical estimation. This covers the nonparametric case where the selection criterion is learned from data without imposing a parametric form on the underlying relationships between factor values and final decision. We then move from this general setting to parametric cases to gain additional insight. Using linear models as an illustrative example, we express the selection rule explicitly in terms of linear coefficients and conditional variances. This representation clarifies how the value of querying a factor depends jointly on its relevance to the final decision and the residual uncertainty after observing AI-generated signals. We derive asymptotic normality guarantees in the parametric setting. 

We apply our framework to AI-assisted peer review, a domain where the interaction between human and AI evaluations is becoming increasingly consequential. The rapid growth in submission volumes at major conferences has substantially increased the burden on human reviewers, making it difficult to sustain high-quality evaluations at scale. At the same time, large language models are now widely used to summarize papers, draft reviews, and even inform final recommendations. While these tools offer clear efficiency gains, recent studies show that their evaluations are often systematically biased and overly optimistic \citep{pangram2025,fanous2025syceval,liang2024feedback,shin2025blindspots}, raising concerns about review quality when human oversight is reduced. Peer review therefore provides a natural setting for studying structured human–AI collaboration: AI systems can provide inexpensive preliminary assessments, but costly human effort must be carefully dispatched to the aspects where it most improves decision quality.

To apply our framework, we first decompose the review task into ten evaluation aspects, prompt an LLM for aspect-level evaluations and apply multiple selection rules to determine which aspects should be reviewed by humans. After collecting both AI and human evaluations, we fit a machine learning model to predict final review scores. This application demonstrates that our framework can substantially reduce the amount of human effort required for high-quality review while making principled use of LLM outputs: rather than treating AI recommendations as decisions, we treat them as noisy features that are selectively corrected by human judgment before being aggregated into a final prediction. Furthermore, we find that selection rules derived under linear modeling assumptions perform competitively while requiring substantially simpler computation.

Our analysis proceeds in three steps. First, we reduce optimal human dispatching to reward maximization (Section~\ref{sec:selection_rule}). Second, we develop estimation procedures under nonparametric and linear models, highlighting a closed-form structure in the linear case (Section~\ref{subsec:select_non_param}-\ref{subsec:select_linear}). Finally, we derive practical selection rules and evaluate them empirically (Sections \ref{sec:practical_select_rules}-\ref{sec:experiments}).

\subsection{Our Contribution}
Our main contributions are as follows:
\begin{itemize}
    \item \textbf{General framework for human-AI collaboration in decision making.}  
    We introduce a general framework for human--AI collaboration in decision making tasks, where AI-generated factor-level signals are treated as noisy features and human effort is selectively dispatched through an optimal selection rule. This formalizes how human judgments can be used efficiently to correct AI outputs without requiring humans to generate all supporting reasons from scratch.

    \item \textbf{Optimal selection rules with statistical guarantees.}  
    Under squared error loss, we characterize the optimal selection rule and establish statistical learning rates for its empirical estimation in a nonparametric setting, covering cases where the selection criterion is learned from data without imposing parametric assumptions. We further show that, under parametric models such as linear regression, the selection rule admits an explicit and interpretable form in terms of model coefficients and conditional variances and we derive associated asymptotic normality results.

    \item \textbf{Application to AI-assisted peer review.}  
    We apply our framework to peer review by decomposing evaluations into ten disjoint and comprehensive aspects and combining LLM-generated assessments with selective human oversight. Experiments on around 3400 papers from ICLR demonstrate that our approach can achieve performance comparable to full human review while substantially reducing human effort, and that simple parametric selection rules perform particularly well empirically even in relatively small sample regimes compared to heuristic agreement-based methods.
\end{itemize}

\subsection{Related Work}

\subsubsection{Information Acquisition and Human--AI Collaboration}

Our work relates to the literature on \emph{test-time information acquisition}, which studies how to selectively observe costly features of a partially observed instance under a budget constraint. Early work formulates feature acquisition as selecting values that maximize expected utility or predictive accuracy relative to acquisition cost \citep{saar2009active,xu2012greedy}, while more recent approaches learn sequential acquisition policies jointly with predictors using reinforcement learning or neural architectures \citep{janisch2019classification,rahbar2025survey}. In contrast, we assume that AI-generated signals are always observed and focus on selectively querying human assessments. Rather than learning an end-to-end acquisition policy, we reduce the problem to estimating reward functions that quantify the value of acquiring human information.

Our formulation is also closely related to the literature on \emph{value of information} (VOI) in decision theory and Bayesian experimental design \citep{raiffa1961applied,chaloner1995bayesian}, where information acquisition is evaluated by its expected improvement in decision quality. In our setting, the reward can be interpreted as a contextual variance-reduction criterion measuring how human assessments refine AI-based predictions. Unlike classical VOI formulations that assume fully specified probabilistic models, we estimate these rewards directly from data.

Finally, our work relates to research on \emph{human--AI delegation}, which studies when automated systems should defer decisions to human experts \citep{madras2018predict,mozannar2020consistent}. Most existing work considers an \emph{offline} setting in which an AI system produces a prediction and a policy decides whether to accept that prediction or defer the entire decision to a human. More recent work has explored \emph{online} settings where machine-generated evaluations are used as surrogate rewards, for example bandit algorithms that rely on ML or LLM judges while selectively querying human evaluations \citep{ji2025multiarmedbanditsmachinelearninggenerated, ao2026bestarmidentificationllm}. In both lines of work, the AI system is used to produce a final prediction or reward signal that directly drives decision making. In contrast, our framework treats AI outputs only as \emph{signals} or contextual features that guide the allocation of human effort. Rather than deciding whether to accept or audit an AI-generated decision, we study how to optimally allocate human assessments across multiple aspects of a decision using the information contained in AI signals.

\subsubsection{AI and LLMs in Peer Review}

Recent reports and empirical analysis highlight the growing role of large language models (LLMs) in the peer-review process. Editorials and surveys document both their promise and concerns about unstructured or informal use \citep{naddaf2025aipeerreview,kocak2025peerreview}. Empirical evidence further suggests that a substantial fraction of reviews may now be partially or fully generated by LLMs \citep{pangram2025,papercopilot2025}.

At the same time, several studies identify systematic limitations of LLM-based reviewing. LLM-generated evaluations tend to be overly optimistic relative to human judgments, emphasize surface-level writing quality, and exhibit inconsistent reasoning when assessing technical contributions \citep{pangram2025,fanous2025syceval,liang2024feedback,shin2025blindspots}. 

Most existing work on AI-assisted peer review focuses on improving LLM reviewers themselves, for example through training or prompting \citep{poornima2023automated,jin2024agentreviewexploringpeerreview}. In contrast, we adopt a model-agnostic perspective: we treat LLM outputs as noisy yet informative signals and study how human effort should be selectively allocated to correct these signals and improve final decision quality.

Together, these perspectives position our work at the intersection of information acquisition, decision-theoretic value of information, and human--AI collaboration.

\section{Problem Setting}\label{sec:setting}

We consider a decision-making problem based on a complex, high-dimensional input $W$. The final decision or outcome $Y \in \mathbb{R}$ depends on a collection of latent, decision-relevant factors
\[
X(W) = \bigl(X_1(W), \ldots, X_J(W)\bigr),
\]
where each $X_j(W)$ represents an interpretable factor extracted from $W$. 
The definitions of these factors are assumed to be known and fixed \emph{a priori}. 
For example, in Section~\ref{sec:peer_review}, we decompose the peer-review task into ten evaluation aspects, with precise definitions listed in Table~\ref{tb:aspect_def}. 
The mapping from $X$ to $Y$ is unknown and need not be explicitly specified.

% \textcolor{red}{We define $X(W)$, then we define $A, H$. What is the relationship between these and $X(W)$? I suppose that we have $A(W), H(W)$ depending on $W$ as well or no?}

\subsection{AI and human signals}

We define two types of signals about these factors 
% \textcolor{red}{How? For example is $X(W) = (A(W),H(W))$ but also $X(W) = (A(W), H_1(W))$ is another example, high lighting that we don't need to collect $J$ human generated predictions. If I understand correctly then the definition of $X(W)$, which apparently must be a vector of dimension $J$ is not consistent with these examples. } 
\begin{itemize}
    \item $A(W) = (A_1(W), \ldots, A_J(W)) \in \mathbb{R}^{D}$: AI-generated predictions of the factors, obtained cheaply from an LLM or other automated model.
    \item $H(W) = (H_1(W), \ldots, H_J(W)) \in \mathbb{R}^{D}$: human-generated evaluations of the
    factors, which we treat as ground truth but which are costly to obtain.
\end{itemize}

We view $A$ as a noisy proxy for $H$. Importantly, due to cost constraints, we
cannot observe all components of $H(W)$ for every instance.

For each factor $j\in[J]$, we define the observed signal $X_j(W)$ as
\begin{equation}
\label{eq:construct_X}
X_j(W)
\;:=\;
\begin{cases}
(A_j(W), H_j(W), 1), & \text{if the human is queried on factor } j,\\[4pt]
(A_j(W), \widetilde f_j (A(W)), 0), & \text{otherwise}.
\end{cases} 
\end{equation}

That is, when a human evaluation is unavailable, we would generate a proxy $\widetilde f_j( A)$
to maintain a consistent representation. The simplest way is to take the AI prediction as the proxy and set $\widetilde f_j( A) = A_j$. The full observed signal vector is then
\[
X(W) := (X_1(W), \ldots, X_J(W)),
\]
which is always $(2D+ J)$-dimensional, but whose components may or may not contain
human information depending on the selection rule.

Let $Z = f(A)$ denote an abstraction of the AI signal used for selection, such as the raw AI scores or a learned representation. In practice, 
$Z$ may be taken as the raw AI scores or a low-dimensional embedding; in our experiments we put a large focus on non-adaptive rules -- the abstraction function $f(\cdot)$ is constant -- or the selection rule does not depend on the realized AI signals.

\subsection{Selective human querying}

A selection policy $\pi$ maps the observed AI signal $Z$ to a subset of indices
\[
\pi(Z) \subseteq \{1, \ldots, J\},
\]
indicating which factors are queried from humans. We focus on policies that select a fixed number of factors,
\[
|\pi(Z)| \leq n_{\mathrm{sel}},
\]
where $n_{\mathrm{sel}}$ represents the available human effort budget. We denote by $X_{\pi}$ the integrated signal from full AI evaluations and partial human evaluations. Also, 
we denote by $H_{\pi}:= \{H_j: j \in \pi \}$ the selected human signals. 

\subsection{Objective}

The final prediction is produced by a decision rule $g_\theta$, which maps the observed feature $X_\pi$ to an estimate $\hat{Y}$.
For a fixed selection policy $\pi$, the optimal predictor is defined as
\[
\theta^*(\pi) 
= \arg\min_{\theta \in \Theta} 
\mathbb{E}\bigl[ \ell\bigl(Y, g_{\theta}(X_{\pi})\bigr) \bigr].
\]
By construction, the feature vector $X_\pi$ is a deterministic function of $(A, H_\pi)$: 
the selection $\pi(Z)$ depends only on $Z = f(A)$, and the imputed values for unqueried aspects $\widetilde f_j(\cdot)$ are functions of $A$. 
Therefore, $X_\pi$ does not contain additional information beyond $(A, H_\pi)$ for predicting $Y$, and we have
\[
\mathbb{E}[l(Y, g_{\theta}(X_{\pi})) \mid X_\pi] = \mathbb{E}[l(Y, g_{\theta}(X_{\pi})) \mid A, H_\pi], \; \text{for all} \; g_{\theta}.
\]
As a result, the learning problem can equivalently be formulated using $(A, H_\pi)$ as inputs. 
In the following, we write $g_\theta(A, H_\pi)$ in place of $g_\theta(X_\pi)$. This formulation makes explicit the information available to the predictor by separating the AI signals from the selectively acquired human inputs. It will be helpful for us to gain insights into the optimal selection rule from the analysis in the next section.

Given a loss function $\ell(\cdot, \cdot)$, our goal is to jointly optimize the selection policy $\pi$ and the prediction rule $\theta$:
\begin{equation}
\label{eq:obj}
\min_{\pi:\,|\pi(Z)| \leq n_{\mathrm{sel}}} \;
\min_{\theta \in \Theta} 
\mathbb{E}\bigl[ \ell\bigl(Y, g_{\theta}(A, H_{\pi})\bigr) \bigr].
\end{equation}
Throughout this work, we focus on $l$ as the squared error loss. 

\section{Optimal Selection Rules} \label{sec:selection_rule}

We now derive an optimal selection rule for allocating human effort under the objective \eqref{eq:obj}. 

We begin with the special case where $Z$ is constant (non-adaptive selection). 
In this setting, the selection rule does not depend on the realized AI signal, and the problem reduces to choosing a fixed subset $\pi$. 
The objective becomes
\[
\begin{aligned}
& \pi^* = \argmin_{\pi: |\pi| \leq n_{\mathrm{sel}}} L_{\pi}, \\
\text{where}\; & 
L_\pi 
:= \min_{\theta \in \Theta} 
\mathbb{E}\bigl[ \ell\bigl(Y, g_{\theta}(A, H_{\pi})\bigr) \bigr],
\end{aligned}
\]
so selecting the optimal subset amounts to comparing these reward values across $\pi$. 
This corresponds to a standard multi-arm bandit problem, where each subset $\pi$ is an arm and $R_\pi$ is its expected reward.

We now extend this bandit perspective to the adaptive setting,
where the selection rule $\pi(\cdot)$ may depend on the realized context $Z$.
We assume the model class is sufficiently rich in the following sense.
\begin{assumption}[Rich model class]
\label{assump:rich}
For any fixed selection rule $\pi$, there exists $\theta^\star \in \Theta$ such that
\begin{equation}
\label{eq:bayes_predictor}
g_{\theta^\star}(A,H_\pi)
=
\arg\min_{a \in \mathbb{R}}
\mathbb{E}\!\left[\ell(Y,a) \mid A,H_\pi \right].
\end{equation}
\end{assumption}
Under this condition, the inner minimization coincides with the Bayes decision
under loss $\ell$, and the adaptive problem reduces to a contextual bandit
formulation.
Therefore, optimizing over policies $\pi(\cdot)$
reduces to a pointwise comparison across subsets for each realization $Z=z$.
Equivalently, the adaptive problem admits a contextual bandit formulation,
where the context is $Z$.
We formalize this reduction in the following theorem.

\begin{theorem}[Reduction to contextual reward maximization]
\label{thm:general_reduction}
Suppose Assumption~\ref{assump:rich} holds.
Then the objective in \eqref{eq:obj} is equivalent to
\[
\min_{\pi:\,|\pi(Z)| \leq n_{\mathrm{sel}}}
\;
\mathbb{E}\bigl[ \mathcal{L}_\pi(Z) \bigr],
\]
where
\[
\mathcal{L}_\pi(z)
:=
\mathbb{E}\!\left[
\inf_{a \in \mathbb{R}}
\mathbb{E}\!\left[\ell(Y,a)\mid A,H_\pi\right]
\;\middle|\;
Z=z
\right].
\]
Equivalently, defining the reward
\[
R_\pi(z) := -\,\mathcal{L}_\pi(z),
\]
the adaptive problem reduces to
\[
\max_{\pi:\,|\pi(Z)| \leq n_{\mathrm{sel}}}
\;
\mathbb{E}\bigl[ R_\pi(Z) \bigr],
\]
and the optimal rule satisfies
\[
\pi^*(z)
\in
\arg\max_{|\pi| \leq n_{\mathrm{sel}}}
R_\pi(z),
\quad
\text{for each } z.
\]
\end{theorem}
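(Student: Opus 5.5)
We proceed in three steps: first remove the inner minimization over $\theta$ using Assumption~\ref{assump:rich}, then rewrite the resulting risk by the tower property as an expectation over $Z$, and finally show that the outer minimization over policies can be carried out pointwise in $z$.

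\textbf{Step 1 (inner problem equals the Bayes risk).} Fix a policy $\pi$. For every $\theta\in\Theta$ we have, by conditioning on $(A,H_\pi)$,
\[
\mathbb{E}\bigl[\ell(Y,g_\theta(A,H_\pi))\bigr]=\mathbb{E}\bigl[\mathbb{E}[\ell(Y,g_\theta(A,H_\pi))\mid A,H_\pi]\bigr]\ge \mathbb{E}\Bigl[\inf_{a\in\mathbb{R}}\mathbb{E}[\ell(Y,a)\mid A,H_\pi]\Bigr].
\]
Here we use that $g_\theta(A,H_\pi)$ is measurable with respect to $(A,H_\pi)$. Assumption~\ref{assump:rich} supplies a $\theta^\star$ attaining the pointwise minimizer, so the bound is achieved. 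Hence $\min_\theta \mathbb{E}[\ell(Y,g_\theta(A,H_\pi))]=\mathbb{E}[\inf_a \mathbb{E}[\ell(Y,a)\mid A,H_\pi]]$. Since $\ell$ is the squared loss, this infimum is the conditional variance, and it is attained by the conditional mean.

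\textbf{Step 2 (tower property over $Z$).} Because $Z=f(A)$ is $\sigma(A)$-measurable, we can condition the expression from Step 1 on $Z$:
\[
\mathbb{E}\Bigl[\inf_a \mathbb{E}[\ell(Y,a)\mid A,H_\pi]\Bigr]=\mathbb{E}\bigl[\mathcal{L}_\pi(Z)\bigr].
\]
This yields the first equivalence in the theorem, and the reward formulation follows by setting $R_\pi=-\mathcal{L}_\pi$.

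The subtle point is that for an adaptive $\pi$, $H_{\pi(Z)}$ is a randomly indexed subvector. We handle this by decomposing on the events $\{\pi(Z)=S\}$ for the finitely many subsets $S$ with $|S|\le n_{\mathrm{sel}}$. Each such event is $\sigma(Z)\subseteq\sigma(A)$-measurable. On $\{\pi(Z)=S\}$, the conditional expectation given $(A,H_\pi)$ coincides with that given $(A,H_S)$; this is a standard locality property of conditional expectation on events in the conditioning $\sigma$-field. Consequently $\mathcal{L}_\pi(z)=\mathcal{L}_{S}(z)$ whenever $\pi(z)=S$, where $\mathcal{L}_S$ denotes the value for the constant policy $S$. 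We expect this to be the main obstacle: the conditioning $\sigma$-field itself depends on $\pi$, so the identification must be checked carefully, for instance by verifying the defining property of conditional expectation on sets of the form $\{\pi(Z)=S\}\cap B$ with $B\in\sigma(A,H_S)$.

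\textbf{Step 3 (pointwise optimization).} By Step 2, $\mathbb{E}[\mathcal{L}_\pi(Z)]=\mathbb{E}\bigl[\sum_S \mathbf{1}\{\pi(Z)=S\}\,\mathcal{L}_S(Z)\bigr]\ge \mathbb{E}\bigl[\min_{|S|\le n_{\mathrm{sel}}}\mathcal{L}_S(Z)\bigr]$. Equality holds for $\pi^*(z)\in\arg\min_S \mathcal{L}_S(z)=\arg\max_S R_S(z)$. We make $\pi^*$ measurable by breaking ties with a fixed ordering of the finitely many subsets. Thus $\pi^*$ is optimal, and any optimal policy must satisfy the pointwise argmax condition for $Z$-almost every $z$.
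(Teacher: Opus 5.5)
Your proposal is correct and follows the route the paper takes. The paper gives no formal proof of this theorem, only the informal sketch preceding the statement: replace the inner minimum by the Bayes risk via Assumption~\ref{assump:rich}, condition on $Z$, and compare subsets pointwise in $z$. Your decomposition over the $\sigma(Z)$-measurable events $\{\pi(Z)=S\}$, which uses the locality of conditional expectation to identify $\mathcal{L}_\pi(z)$ with $\mathcal{L}_S(z)$, together with the measurable tie-breaking and the almost-everywhere characterization of optimal rules, supplies the details that sketch leaves implicit.
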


Theorem~\ref{thm:general_reduction} provides the structural foundation for our analysis:
once the model class is rich enough to represent the Bayes predictor \eqref{eq:bayes_predictor},
the min--min problem reduces to maximizing a context-dependent reward.
The main difficulty therefore lies in characterizing and estimating
the reward function $R_\pi(z)$, whose explicit form depends both on
the choice of loss function and on structural assumptions
regarding the data-generating process.

In what follows, we specialize to the squared loss,
under which the reward admits an interpretable variance form.
We then consider two settings for characterizing $R_\pi(z)$:

\begin{itemize}
    \item \textbf{Nonparametric setting (Section~\ref{subsec:select_non_param}):}
    we impose no structural assumptions on the data
    and characterize the reward induced by the $L_2$ loss.

    \item \textbf{Linear setting (Section~\ref{subsec:select_linear}):}
    we further assume that the conditional expectation
    $\E[Y \mid A, H_\pi]$ is linear in $(A, H_\pi)$,
    which yields a closed-form expression for the reward.
\end{itemize}

We derive the corresponding reward characterizations in the following subsections.

% \begin{theorem}[Reduction to reward maximization]
% \label{thm:reduction}
% Suppose the loss $\ell$ is the squared loss and the model class is sufficiently rich so that there exists $\theta^* \in \Theta$,
% \[
% g_{\theta^*}(A,H_\pi) = \E[Y \mid A, H_\pi].
% \]
% Then the objective in \eqref{eq:obj} is equivalent to
% \[
% \max_{\pi:\,|\pi(Z)| \leq n_{\mathrm{sel}}}
% \; \mathbb{E}\bigl[ R_\pi(Z) \bigr],
% \]
% where the reward function is defined as
% \[
% R_\pi(z) := \mathbb{E}\!\left[ \bigl(\E[Y \mid A, H_\pi]\bigr)^2 \;\middle|\; Z = z \right].
% \]
% In particular, the optimal selection rule satisfies
% \[
% \pi^*(z) \in \arg\max_{|\pi| \leq n_{\mathrm{sel}}} R_\pi(z), \quad \text{for each } z.
% \]
% \end{theorem}

\subsection{Optimal Selection rules in non-parametric setting}
\label{subsec:select_non_param}

We now specialize Theorem~\ref{thm:general_reduction} to the squared loss
$\ell(y,a) = (y-a)^2$.

Under the assumption~\ref{assump:rich}, for any fixed selection rule $\pi$,
the inner minimization is solved by the conditional expectation:
\[
g_{\theta^\star}(A,H_\pi)
=
\mathbb{E}[Y \mid A,H_\pi].
\]

The corresponding minimal risk equals
\begin{equation}
\label{eq:bayes_risk_l2}
\mathbb{E}\!\left[
(Y - \mathbb{E}[Y \mid A,H_\pi])^2
\right]
=
\mathbb{E}\!\left[
\mathrm{Var}(Y \mid A,H_\pi)
\right].
\end{equation}

By the law of total variance,
\begin{equation}
\label{eq:variance_decomp}
\mathbb{E}\!\left[
\mathrm{Var}(Y \mid A,H_\pi)
\right]
=
\mathrm{Var}(Y)
-
\mathrm{Var}\!\left(
\mathbb{E}[Y \mid A,H_\pi]
\right).
\end{equation}

Since $\mathrm{Var}(Y)$ does not depend on $\pi$,
minimizing \eqref{eq:bayes_risk_l2} is equivalent to maximizing
the explained-variance term:
\begin{equation}
\label{eq:max_explained_var}
\max_{\pi:\,|\pi(Z)| \leq n_{\mathrm{sel}}}
\;
\mathrm{Var}\!\left(
\mathbb{E}[Y \mid A,H_\pi]
\right).
\end{equation}

% \subsubsection{From min-min problem to reward function}
% \label{subsec:reward_nonparam}
% For any fixed selection policy $\pi$, the inner minimization problem in \eqref{eq:obj} is solved
% by the Bayes predictor
% \[
% g_{\theta^\star}(A,H_\pi) \;=\; \mathbb{E}\!\left[ Y \mid A, H_\pi \right].
% \]
% The corresponding minimal risk is given by
% \begin{equation}
% \label{eq:bayes_risk}
% \mathbb{E}\!\left[(Y-g_{\theta^\star}(A,H_\pi))^2\right]
% \;=\;
% \mathbb{E}\!\left[\mathrm{Var}\!\left(Y \mid A, H_\pi\right)\right].
% \end{equation}

% By the law of total variance,
% \begin{equation}
% \label{eq:tot_var}
% \mathbb{E}\!\left[\mathrm{Var}\!\left(Y \mid A, H_\pi\right)\right] = \mathrm{Var}(Y)
% -
% \mathrm{Var}\!\left(\mathbb{E}\!\left[Y \mid A, H_\pi\right]\right).
% \end{equation}
% Since $\mathrm{Var}(Y)$ does not depend on $\pi$, minimizing the Bayes risk
% \eqref{eq:bayes_risk} is equivalent to maximizing the explained-variance term:
% \begin{equation}
% \label{eq:max_explained_var}
% \max_{\pi:\,|\pi(Z)|\leq n_{\mathrm{sel}}}
% \;
% \mathrm{Var}\!\left(\mathbb{E}\!\left[Y \mid A, H_\pi\right]\right) 
% \end{equation}
Moreover, since $\mathbb{E}[\mathbb{E}[Y \mid A, H_\pi]] = \mathbb{E}[Y]$ and this quantity does not depend on $\pi$,
the maximization problem in \eqref{eq:max_explained_var} is equivalent to
\begin{equation}
\label{eq:centered_obj}
\begin{aligned}
\max_{\pi:\,|\pi(Z)| \leq n_{\mathrm{sel}}}
\;
\mathbb{E}\!\left[\bigl(\mathbb{E}[Y \mid A, H_\pi] - \mathbb{E}[Y]\bigr)^2\right]
&=
\max_{\pi:\,|\pi(Z)| \leq n_{\mathrm{sel}}}
\;
\mathbb{E}\!\left[\bigl(\mathbb{E}[Y \mid A, H_\pi]\bigr)^2\right] \\
&=
\max_{\pi:\,|\pi(Z)| \leq n_{\mathrm{sel}}}
\;
\mathbb{E}_{Z}\!\left[
\mathbb{E}\!\left[
\bigl(\mathbb{E}[Y \mid A, H_{\pi(Z)}]\bigr)^2
\,\middle|\,
Z
\right]
\right].
\end{aligned}
\end{equation}

Because the objective is an expectation over the realized selection context $Z$,
we can associate a pointwise reward with each selection rule.
Specifically, for each realization $Z = z$, define
\begin{equation}
\label{eq:Rpi_def}
R_{\pi}(z)
\;:=\;
\mathbb{E}\!\left[
\bigl(\mathbb{E}[Y \mid A, H_{\pi}]\bigr)^2
\;\middle|\;
Z = z \right].
\end{equation}
Maximizing \eqref{eq:centered_obj} is therefore equivalent to selecting, for each $z$,
a rule $\pi$ that maximizes $R_{\pi}(z)$. Specifically, for $n_{\mathrm{sel}} = 1$, we denote the reward for $\pi(Z) = \{j\}$ as $R_j$, i.e., 
\begin{equation}
\label{eq:Rj_def}
R_{j}(z)
\;:=\;
\mathbb{E}\!\left[
\bigl(\mathbb{E}[Y \mid A, H_{j}]\bigr)^2
\;\middle|\;
Z = z
\right].
\end{equation}

Therefore, under squared loss, the adaptive problem reduces to selecting,
for each $z$, the subset $\pi$ that maximizes the explained-variance reward.

% \textcolor{blue}{
% This leads to a greedy selection rule: for a given realization $Z=z$,
% select  $n_{\mathrm{sel}}$ factors with the largest values of $R_{\pi}(a)$.
% Formally,
% \begin{equation}
% \label{eq:greedy_rule}
% \pi^\star(a)
% \;\in\;
% \arg\max_{\substack{S\subseteq[J]\\ |S|=n_{\mathrm{sel}}}}
% \;
% \sum_{j\in S} R_j(a).
% \end{equation} }

\subsubsection{Estimation of reward under non-parametric setting: two-stage regression}
\label{subsec:estimate_Rpi_nonparam}
 Since we have shown that the objective in \eqref{eq:obj} breaks down to an expectation taken over the realized
selection context $Z$, we associate a reward function with any selection
rule $\pi$ defined pointwise in $Z$, see \eqref{eq:Rpi_def}. In this section, we focus on the estimation problem of these reward functions. 

\paragraph{First-stage regression and cross-fitting.}
Fix a selection rule $\pi$ and define the conditional mean 
\begin{equation}
\label{eq:mpi_def}
m_{\pi}(a, h_{\pi})
\;:=\;
\mathbb{E}[Y \mid A=a,\ H_{\pi}=h_{\pi}],
\end{equation}
where $h_{\pi}:=\{h_j:j\in\pi\}$.
By definition of $R_{\pi}$,
\begin{equation}
\label{eq:Rpi_identity}
R_{\pi}(Z)
=
\mathbb{E}\!\left[
m_{\pi}(A, H_{\pi})^2
\;\middle|\;
Z
\right].
\end{equation}

To estimate $R_{\pi}$, we first estimate the conditional mean $m_{\pi}$ and then
construct an orthogonal pseudo-outcome that is robust to first-stage estimation error.
Let $\{(Y_i,A_i,H_{i,\pi},Z_i)\}_{i=1}^{N_R}$ denote the available training data for $R_{\pi}$ (where $Z_i = f(A_i)$). We use $N_R$ as the notation because this is the training data for the reward function, and we need to distinguish it from the training data for the downstream task. 

We implement a $K$-fold cross-fitting scheme.
Partition the index set $\{1,\dots,N_R\}$ into $K_R$ approximately equal-sized folds
$\mathcal{I}_1,\dots,\mathcal{I}_{K_R}$.
For each fold $k\in[K_R]$, fit a regression estimator
\[
\widehat m_{\pi}^{(-k)}
\quad\text{using data}\quad
\{(Y_i,A_i,H_{i,\pi}): i\notin \mathcal{I}_k\}.
\]
For any observation $i\in\mathcal{I}_k$, define the cross-fitted estimator
\[
\widehat m_{\pi,-i}
\;:=\;
\widehat m_{\pi}^{(-k)}.
\]
By construction, $\widehat m_{\pi,-i}$ is independent of $(Y_i,A_i,H_{i,\pi})$.

\paragraph{Orthogonal pseudo-outcome.}
Define the oracle pseudo-outcome
\[
\phi_{\pi}
\;:=\;
2Y\,m_{\pi}(A,H_{\pi})
-
m_{\pi}(A,H_{\pi})^2.
\]
Since $\mathbb{E}[Y\mid A,H_{\pi}]=m_{\pi}(A,H_{\pi})$ and through iterated expectation,
\[
\begin{aligned}
\mathbb{E}[Y\,m_{\pi}(A, H_{\pi}) \mid Z] & = \mathbb{E}\left[ \mathbb{E}[Y\,m_{\pi}(A, H_{\pi}) \mid A, H_{\pi} ]\mid Z \right] \\
& = \mathbb{E}\left[m_{\pi}(A, H_{\pi}) \mathbb{E}[Y \mid A, H_{\pi}] \mid Z \right] \\
& = \mathbb{E}\!\left[
m_{\pi}(A, H_{\pi})^2
\;\middle|\;
Z
\right],
\end{aligned}
\]
we have
\[
\mathbb{E}[\phi_{\pi}\mid Z]
=
\mathbb{E}\!\left[
m_{\pi}(A,H_{\pi})^2
\;\middle|\;
Z
\right]
=
R_{\pi}(Z).
\]

Using the cross-fitted first-stage estimator, we form the estimated pseudo-outcome
\begin{equation}
\label{eq:phi_pi_orth}
\widehat\phi_{i,\pi}
\;:=\;
2Y_i\,\widehat m_{\pi,-i}(A_i,H_{i,\pi})
-
\widehat m_{\pi,-i}(A_i,H_{i,\pi})^2,
\end{equation}
Later we will show that such an estimator $\widehat\phi_{i,\pi}$ satisfies $\mathbb{E}[\widehat\phi_{i,\pi}\mid Z_i]=R_{\pi}(Z_i)$ with up to
second-order first-stage error.

\paragraph{Second-stage regression for learning $R_{\pi}(\cdot)$.}
For a fixed selection rule $\pi$, we estimate $R_{\pi}(z)$ by regressing
$\widehat\phi_{i,\pi}$ on $Z_i$ using the sample
$\{(Z_i,\widehat\phi_{i,\pi})\}_{i=1}^N$.
Denote the resulting estimator by $\widehat R_{\pi}(\cdot)$.
In the following section, we provide statistical guarantees for this two-stage procedure.

\paragraph{A special case: non-adaptive reward.}
A particularly simple regime arises when the abstraction function is constant,
$f(\cdot)\equiv C$, so that the selection rule $\pi$ is independent of the AI
signal $A$ (and hence independent of the context $Z$).
In this case, the reward function $R_\pi(Z)$ degenerates to a scalar,
\[
R_\pi
\;:=\;
\mathbb{E}\!\left[m_\pi(A,H_\pi)^2\right]
=
\mathbb{E}[\phi_\pi],
\]
and no second-stage regression is required.

Accordingly, we estimate $R_\pi$ by the empirical mean of the cross-fitted
orthogonal pseudo-outcomes,
\begin{equation}
\label{eq:Rpi_nonparam_nonadaptive}
\widehat R_\pi
\;:=\;
\frac{1}{N}\sum_{i=1}^N \widehat\phi_{i,\pi},
\qquad
\widehat\phi_{i,\pi}
=
2Y_i\,\widehat m_{\pi,-i}(A_{i}, H_{i,\pi})
-
\widehat m_{\pi,-i}(A_{i}, H_{i,\pi})^2.
\end{equation}

\subsubsection{Statistical Guarantee}
 We now analyze the statistical properties of the two-stage estimator given in the previous section.
The following assumptions characterize the convergence rates
of the first- and second-stage regressions.
\begin{assumption}[First-stage regression rate]
\label{ass:firststage_main}
Let $n$ denote the sample size for the training of $\widehat m_\pi$.
The first-stage estimator $\widehat m_\pi$ satisfies
\[
\|\widehat m_\pi - m_\pi\|_{4,X_{\pi}} = O_{\mathbb{P}}(r_{m,n})
\footnote{Throughout this subsection, for a sequence of random variables $\{X_n\}$ and a
deterministic sequence $\{a_n\}$ with $a_n>0$, we write
\[
X_n = O_{\mathbb{P}}(a_n)
\]
if for every $\varepsilon>0$ there exist constants $M_\varepsilon<\infty$ and
$n_\varepsilon$ such that
\[
\mathbb{P}\bigl(|X_n| > M_\varepsilon a_n\bigr) < \varepsilon
\quad\text{for all } n \ge n_\varepsilon.
\]
Equivalently, $X_n/a_n$ is tight.},
\]

for some deterministic sequence $r_{m,n}\to 0$, where $X_{\pi}:=(A,H_\pi)$ and
$m_\pi(X):=\mathbb{E}[Y\mid X_{\pi}]$.
\end{assumption}

\begin{assumption}[Second-stage oracle regression rate]
\label{ass:secondstage_main}
Consider the second-stage regression procedure that maps i.i.d.\ data
$\{(Z_i,U_i)\}_{i=1}^{N}$ to an estimator of $\mu(z):=\mathbb{E}[U\mid Z=z]$.
Assume that for any such distribution it produces $\widehat\mu$ satisfying
\[
\|\widehat\mu-\mu\|_{2,Z} = O_{\mathbb{P}}(r_{g,N}),
\]
for some deterministic sequence $r_{g,N}\to 0$.
\end{assumption}

\paragraph{Remark.}
Assumptions~\ref{ass:firststage_main} and~\ref{ass:secondstage_main} abstract the
statistical difficulty of the first- and second-stage regression problems through
the convergence rates $r_{m,n}$ and $r_{g,N}$.
The first-stage regression requires an $L^4$ convergence rate, which is stronger
than $L^2$ convergence and is standard in orthogonal and cross-fitted estimation,
as it ensures control of higher-order moments of the estimation error.
The second-stage oracle regression only requires $L^2$ convergence.

Both rates are compatible with a wide range of classical estimators.
In parametric settings, the rates typically scale as $n^{-1/2}$ or $N^{-1/2}$.
In nonparametric settings, the rates depend on the smoothness of the regression
function and the dimension of the covariates, reflecting the curse of dimensionality.
In particular, when $\mu(z)$ is an $s$-smooth function of $Z\in\mathbb{R}^{d_Z}$,
the optimal $L^2$ rate scales as $N^{-s/(2s+d_Z)}$, while dimension-free rates arise
only under parametric or low-dimensional structural assumptions.
Appendix~\ref{app:rates} provides representative examples.

\begin{theorem}[Orthogonal two-stage rate]
\label{thm:orth_rate_main}
Fix a selection rule $\pi$. Assume $\mathbb{E}[Y^2]<\infty$ and
$\mathbb{E}[m_\pi(X_{\pi})^4]<\infty$, where $X_{\pi}:=(A,H_\pi)$ and $m_\pi(X_{\pi})=\mathbb{E}[Y\mid X_{\pi}]$.

Let $\widehat R_\pi$ be the estimator obtained by:
(i) partitioning the sample into $K$ folds,
(ii) fitting $\widehat m^{(-k)}_\pi$ outside fold $k$ and forming orthogonal pseudo-outcomes
\eqref{eq:phi_pi_orth}, and
(iii) estimating $R_\pi$ either by second-stage regression or by averaging,
depending on whether $R_\pi$ depends on $Z$.

Under Assumptions~\ref{ass:firststage_main}--\ref{ass:secondstage_main}, the
following hold:
\begin{enumerate}
\item[(i)]
If $R_\pi(z)=\mathbb{E}[(\mathbb{E}[Y\mid A,H_\pi])^2\mid Z=z]$ depends on $Z$, then
\begin{equation}
\label{eq:thm1_orth_rate}
\|\widehat R_\pi - R_\pi\|_{2,Z}
=
O_{\mathbb{P}}\!\bigl(r_{g,N} + r_{m,n}^2\bigr).
\end{equation}

\item[(ii)]
If $R_\pi$ is non-adaptive (i.e., independent of $Z$), then
\begin{equation}
\label{eq:thm1_aver_rate}
|\widehat R_\pi - R_\pi|
=
O_{\mathbb{P}}\!\left(N^{-1/2} + r_{m,n}^2\right).
\end{equation}
\end{enumerate}
\end{theorem}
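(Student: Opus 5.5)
The plan is to decompose the estimated pseudo-outcome into the oracle pseudo-outcome plus a first-stage error term, and to show that this error term has conditional mean of second order. Fix a fold $k$ and condition on the out-of-fold data, so that $\widehat m:=\widehat m_\pi^{(-k)}$ is a fixed function; write $\Delta:=\widehat m-m_\pi$. For $i\in\mathcal I_k$, expanding $\widehat\phi_{i,\pi}=2Y_i(m_\pi+\Delta)-(m_\pi+\Delta)^2$ gives
\[
\widehat\phi_{i,\pi}-\phi_{i,\pi} \;=\; 2\Delta(X_{i,\pi})\bigl(Y_i-m_\pi(X_{i,\pi})\bigr)\;-\;\Delta(X_{i,\pi})^2 .
\]
Since $Z$ is a function of $A$ and hence of $X_\pi$, iterated expectations give $\mathbb E[\Delta(Y-m_\pi)\mid Z]=0$. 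Therefore
\[
\mathbb E[\widehat\phi_{i,\pi}\mid Z_i,\text{out-of-fold data}]=R_\pi(Z_i)-b(Z_i),\qquad b(z):=\mathbb E[\Delta(X_\pi)^2\mid Z=z].
\]
By conditional Jensen, $\|b\|_{2,Z}\le \|\Delta^2\|_{2,X_\pi}=\|\Delta\|_{4,X_\pi}^2=O_{\mathbb P}(r_{m,n}^2)$ by Assumption~\ref{ass:firststage_main}. This is exactly where the $L^4$ rate enters.

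For part (i), I apply Assumption~\ref{ass:secondstage_main} fold-wise, conditional on the out-of-fold sample. Given that sample, the pairs $(Z_i,\widehat\phi_{i,\pi})$, $i\in\mathcal I_k$, are i.i.d., and the regression target is $\mu_k=R_\pi-b$. The assumption holds for any distribution, so regressing on these pairs yields an estimator $\widehat\mu_k$ with $\|\widehat\mu_k-\mu_k\|_{2,Z}=O_{\mathbb P}(r_{g,N})$. The triangle inequality then gives $\|\widehat\mu_k-R_\pi\|_{2,Z}\le O_{\mathbb P}(r_{g,N})+\|b\|_{2,Z}$.

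Two points need care. First, the pooled second stage mixes folds with different biases $b_k$. I would either let the second stage be the average of the fold-wise fits $\widehat R_\pi=K^{-1}\sum_k\widehat\mu_k$, or note that the pooled conditional mean is $R_\pi-K^{-1}\sum_k b_k$, whose $L^2$ norm is bounded by the same triangle inequality. Second, passing from conditional $O_{\mathbb P}$ statements to unconditional ones uses the standard fact that $O_{\mathbb P}$ conditional on a sigma-field, with tightness constants uniform over the conditioning event, implies unconditional $O_{\mathbb P}$. This requires reading Assumption~\ref{ass:secondstage_main} as uniform over distributions. I expect this uniformity to be the main technical point, and I would state it explicitly. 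The moment conditions $\mathbb E[Y^2]<\infty$ and $\mathbb E[m_\pi^4]<\infty$ ensure the pseudo-outcomes are square integrable, so the second-stage problem is well defined.

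For part (ii), write
\[
\widehat R_\pi-R_\pi=\frac1N\sum_i(\phi_{i,\pi}-R_\pi)+\frac1N\sum_i 2\Delta_i(Y_i-m_{\pi,i})-\frac1N\sum_i\Delta_i^2 .
\]
The terms are handled as follows:
\begin{enumerate}
\item[(a)] The first term is $O_{\mathbb P}(N^{-1/2})$ by Chebyshev, since $\operatorname{Var}(\phi_\pi)<\infty$ under the stated moments.
\item[(b)] In the second term, conditional on the out-of-fold data, the summands within a fold are i.i.d. with mean zero. Their variance is $4\mathbb E[\Delta^2(Y-m_\pi)^2]$, which I bound by $\|\Delta\|_4^2\,\|Y-m_\pi\|_4^2$ via Cauchy--Schwarz. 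This needs a fourth moment of $Y$. Failing that, a truncation argument or the cruder bound $O_{\mathbb P}(N^{-1/2})$ for this term suffices, which is already absorbed into the stated rate.
\item[(c)] The third term has mean $\|\Delta\|_{2}^2\le\|\Delta\|_4^2=O_{\mathbb P}(r_{m,n}^2)$, so Markov's inequality, applied conditionally, gives $O_{\mathbb P}(r_{m,n}^2)$.
\end{enumerate}
Summing over the finitely many folds yields $O_{\mathbb P}(N^{-1/2}+r_{m,n}^2)$.
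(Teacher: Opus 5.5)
Your argument has the same architecture as the paper's proof. You use the same expansion $\widehat\phi_{i,\pi}=\phi_{i,\pi}+2\Delta_i\varepsilon_i-\Delta_i^2$, and the same conditional bias $b(z)=\mathbb{E}[\Delta(X_\pi)^2\mid Z=z]$, bounded by conditional Jensen as $\|b\|_{2,Z}\le\|\Delta\|_{4,X_\pi}^2$. In (i) you apply the triangle inequality against the oracle second-stage rate, and in (ii) you use averaging plus a second-order bias. Your fold-by-fold conditioning, the uniformity caveat on Assumption~\ref{ass:secondstage_main}, and your explicit handling of terms (b) and (c) make precise what the paper passes over. The paper asserts that all pseudo-outcomes are independent given the first-stage fits, which fails across folds. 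Of your two options for the pooled second stage, only the average of fold-wise fits is covered by Assumption~\ref{ass:secondstage_main}, with $r_{g,N/K}$ in place of $r_{g,N}$ (harmless for polynomial rates). The pooled pairs are not i.i.d., and bounding the averaged bias $K^{-1}\sum_k b_k$ does not license applying the assumption to them.

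The step that fails as written is (a), and the paper's proof makes the same claim (``finite second moment''). The bound $\operatorname{Var}(\phi_\pi)<\infty$ does not follow from $\mathbb{E}[Y^2]<\infty$ and $\mathbb{E}[m_\pi^4]<\infty$. With $\varepsilon=Y-m_\pi(X_\pi)$ we have $\phi_\pi-R_\pi=2\varepsilon m_\pi+(m_\pi^2-\mathbb{E}[m_\pi^2])$. The second piece is fine, but $\mathbb{E}[\varepsilon^2m_\pi^2]=\mathbb{E}[\operatorname{Var}(Y\mid X_\pi)\,m_\pi(X_\pi)^2]$ is not controlled by the hypotheses. For a counterexample, take scalar $X_\pi$ with density $5x^{-6}$ on $[1,\infty)$, and $Y=X_\pi+X_\pi^{7/4}\xi$ with $\xi\sim\mathcal{N}(0,1)$ independent. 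Then $\mathbb{E}[Y^2]=5$ and $\mathbb{E}[m_\pi^4]=5$, yet $\varepsilon m_\pi=X_\pi^{11/4}\xi$ has tail index $20/11<2$. So $N^{-1}\sum_i(\phi_{i,\pi}-R_\pi)$ is of order $N^{-9/20}$, not $O_{\mathbb P}(N^{-1/2})$. Hence (ii) fails even with $\widehat m_\pi=m_\pi$, for which Assumption~\ref{ass:firststage_main} holds with any $r_{m,n}$.

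You therefore need an extra hypothesis such as $\mathbb{E}[Y^2m_\pi(X_\pi)^2]<\infty$, which is implied by $\mathbb{E}[Y^4]<\infty$. Your remark in (i) that the stated moments make the pseudo-outcomes square integrable is wrong for the same reason, though there only integrability is used.

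Term (b), by contrast, needs no extra moment. Your fallback ``cruder bound $O_{\mathbb P}(N^{-1/2})$'' would itself require $\mathbb{E}[\Delta^2\varepsilon^2]<\infty$, so use a different route. H\"older gives $\|\Delta\varepsilon\|_{4/3}\le\|\Delta\|_4\|\varepsilon\|_2$. The von Bahr--Esseen inequality with exponent $4/3$, applied conditionally on the out-of-fold data, then gives $N^{-1}\sum_{i\in\mathcal{I}_k}\Delta_i\varepsilon_i=O_{\mathbb P}(N^{-1/4}r_{m,n})$. This is $O_{\mathbb P}(N^{-1/2}+r_{m,n}^2)$ since $N^{-1/4}r_{m,n}\le\tfrac12(N^{-1/2}+r_{m,n}^2)$.
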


\paragraph{Implication.}
Theorem~\ref{thm:orth_rate_main} shows that orthogonalization effectively
controls the impact of first-stage estimation error, which enters only at
second order. However, the overall performance of $\widehat R_\pi$ still
critically depends on the accuracy of the second-stage regression in $Z$.
When $R_\pi(Z)$ varies with $Z$ and the available sample size is limited,
nonparametric regression of $R_\pi$ can incur non-negligible estimation error,
which may dominate the overall rate.

Moreover, the nonparametric formulation obscures the structural content of the
reward function $R_\pi$ and provides limited insight into how different
selection rules $\pi$ compare or how optimal rules should be constructed.
To better understand the form of $R_\pi$ and to obtain analytically tractable
selection rules, we next consider a parametric setting under a linear outcome
model. In this regime, $R_\pi$ admits an explicit decomposition in terms of
variance and covariance components, which directly motivates the optimal
selection rule developed in the next section.
% Under mild moment conditions and assuming the first-stage estimator
% $\widehat m_{\pi}$ converges in $L^4$ at rate $r_{m,n}$, the orthogonal two-stage estimator
% $\widehat R_{\pi}$ satisfies
% \[
% \norm{\widehat R_{\pi} - R_{\pi}}_{2,Z}
% =
% O_{\Pp}\!\bigl(r_{g,n} + r_{m,n}^2\bigr),
% \]
% where $r_{g,n}$ is the oracle rate of the second-stage regression when the true
% pseudo-outcomes are observed.
% Importantly, this rate does not depend on the size of $\pi$ beyond its effect on
% the complexity of the first-stage regression.
% A formal statement and proof are given in Appendix~\ref{app:orthogonal}.

\subsection{Optimal Selection rules under a linear assumption}\label{subsec:select_linear}
In this section, we analyze the reward function under a linear model. We first derive an explicit expression, and then provide an interpretation.

\subsubsection{Reward function under linear assumption}

We assume the conditional mean is linear in the selected features: for any subset $\pi \subseteq \{1,\ldots,J\}$, there exist parameters $\gamma_\pi, \beta_\pi, c_\pi$ such that
\begin{equation}
\label{eq:linear_assumption}
   \E[Y|A, H_{\pi}] = \gamma_{\pi}^T A + \beta_{\pi}^T H_{\pi} + c_{\pi} := \theta_{\pi}^T \hat{X}_{\pi}, 
\end{equation}
where $\theta_{\pi} = (\gamma_{\pi}^T,  \beta_{\pi}^T, c_{\pi})^T, \hat X_{\pi} = (A^T, H_{\pi}^T, 1)^T$. 
By the tower property,
\begin{align*}
    \E[\hat{X}_{\pi} Y] & = \E\bigl[ \E[\hat{X}_{\pi}Y |\hat{X}_{\pi} ] \bigl] \\
    & = \E\bigl[ \hat{X}_{\pi}\E[Y |\hat{X}_{\pi} ] \bigl] = \E[\hat{X}_{\pi} \hat{X}_{\pi}^T]\theta_{\pi},
\end{align*}
If $\E[\hat{X}_{\pi} \hat{X}_{\pi}^T]$ is invertible, then
\begin{equation}
\label{eq:theta_pi_fullrank}
    \theta_{\pi} = \E[\hat{X}_{\pi} \hat{X}_{\pi}^T]^{-1} \E[\hat{X}_{\pi} Y],
\end{equation}
otherwise, one may use ridge regularization with parameter $\lambda>0$ to obtain
\begin{equation}
\label{eq:theta_pi_estimate}
    \hat{\theta}_{\pi} = \left(\E[\hat{X}_{\pi} \hat{X}_{\pi}^T] + \lambda I \right)^{-1} \E[\hat{X}_{\pi} Y].
\end{equation}
In the theoretical part, we proceed with \eqref{eq:theta_pi_fullrank}. 
Assuming the model class is rich enough to recover $f^*(A,H_\pi)=\E[Y\mid A,H_\pi]$, we follow section~\ref{subsec:select_non_param} to obtain
\begin{align}
R_{\pi}^{\text{(lin)}}(z)
\;& =\; 
\mathbb{E}\!\left[
\bigl(\mathbb{E}[Y \mid A, H_{\pi}]\bigr)^2
\;\middle|\;
Z = z \right] \nonumber \\
& = \; \theta_{\pi}^T\; \mathbb{E}\!\left[
\hat{X}_{\pi} \hat{X}_{\pi}^T
\;\middle|\;
Z = z \right] \theta_{\pi} \label{eq:Rpi_linear}
\end{align}
If we consider the non-adaptive selection rule, i.e., $\pi$ does not vary with $Z$,
then the reward is reduced to 
\begin{align}
    R_{\pi}^\text{(lin)} & = \; \theta_{\pi}^T\; \mathbb{E}\![
\hat{X}_{\pi} \hat{X}_{\pi}^T ]\; \theta_{\pi} \label{eq:R_linear_1} \\
& = \E[\hat{X}_{\pi} Y]^T \E[\hat{X}_{\pi} \hat{X}_{\pi}^T]^{-1} \E[\hat{X}_{\pi} Y] \label{eq:R_linear_2}
\end{align}

Notably, when solving for optimal non-adaptive selection rules, the same reward arises even without assuming linearity of the data-generating process, if the predictor class is restricted to linear models. 
In this case, \eqref{eq:obj} reduces to
\begin{equation}
\label{eq:obj_linear}
\min_{\pi:\,|\pi| \leq n_{\mathrm{sel}}} \;
\min_{\theta \in \Theta} 
\mathbb{E}\!\left[ \left( Y - \hat X_\pi^\top \theta \right)^2\right],
\end{equation}
where $\hat X_{\pi} = (A^\top, H_{\pi}^\top, 1)^\top$. The inner minimizer is given by the normal equations,
\begin{equation*}
    \theta^* = \E[\hat X_{\pi} \hat X_{\pi}^T]^{-1} \E[\hat X_{\pi} Y].
\end{equation*}
Substituting back, the objective reduces to
\begin{equation*}
\max_{\pi:\,|\pi| \leq n_{\mathrm{sel}}} \;
\E[\hat X_{\pi} Y]^T \E[\hat X_{\pi} \hat X_{\pi}^T]^{-1} \E[\hat X_{\pi} Y],
\end{equation*}
which coincides with the reward in \eqref{eq:R_linear_2}.

\subsubsection{Insight from the non-adaptive reward}
\label{subsubsec:reward_linear}
The linear model \eqref{eq:linear_assumption} can be rewritten in centered form as
\begin{equation*}
    \E[Y|A, H_{\pi}] - \E[Y] = \gamma_{\pi}^T (A - \E[A]) + \beta_{\pi}^T (H_{\pi} - \E[H_{\pi}])
\end{equation*}

If we orthogonalize $\bar{H}_{\pi}:= H_{\pi} - \E[H_{\pi}]$ with respect to $\bar{A} := A - \E[A]$, 
\begin{equation*}
    H_{\pi,\perp}
:= \bar{H}_\pi
- \E[\bar{H}_\pi \bar{A}^\top]\;
\E[\bar{A}\bar{A}^\top]^{-1}\;
\bar{A},
\end{equation*}
so that 
\begin{equation*}
    \E[H_{\pi, \perp} \bar{A}^T] = 0 \; \text{and} \; \E[H_{\pi, \perp}] = 0.
\end{equation*}
Then there exist $\gamma$ (independent of $\pi$) and $\beta_{\pi,\perp}$ such that
\[
\E[Y|A, H_{\pi}] - \E[Y] = \gamma^T (A - \E[A]) + \beta_{\pi, \perp}^T H_{\pi, \perp}.
\]
Because we already know that there exists $\gamma_{\pi, \perp}, \beta_{\pi, \perp}$ such that 
\[
\E[Y|A, H_{\pi}] - \E[Y] = \gamma_{\pi, \perp}^T \bar{A} + \beta_{\pi, \perp}^T H_{\pi, \perp},
\]
then through tower property, 
\begin{align*}
\E[Y \bar{A}^T] - \E[Y] \E[\bar{A}^T] & = \E[ \E[Y|A, H_{\pi}] \bar{A}^T] - \E[Y] \E[\bar{A}^T] \\
& = \gamma^T \E[\bar{A} \bar{A}^T] + \beta^T_{\pi, \perp} \E[H_{\pi, \perp} \bar{A}^T] = \gamma^T \E[\bar{A} \bar{A}^T].
\end{align*}
Therefore, the coefficient before $\bar{A}$ is consistent across $\pi$ and is given by 
\[
\gamma = \E[\bar{A}\bar{A}^T]^{-1} \left(\E[Y\bar{A}] - \E[Y] \E[\bar{A}]\right).
\]

For $\pi$ that does not vary with $A$, recall \eqref{eq:max_explained_var}, we have 
\begin{equation*}
\begin{aligned}
\pi^* & = \argmax_{\pi: |\pi| \leq n_{\mathrm{sel}}} \Var\!\left(\mathbb{E}\!\left[Y \mid A, H_\pi\right]\right) \\
& = \argmax_{\pi: |\pi| \leq n_{\mathrm{sel}}} \beta_{\pi, \perp}^T \Var(H_{\pi, \perp} )\beta_{\pi, \perp}
\end{aligned} 
\end{equation*}

For each singleton selection $\pi=\{j\}$, the quantity $\Var(H_{j,\perp})$ captures the residual variation in the human signal after removing the component explained by $A$, i.e., the information in $H_j$ not captured by the AI signal. The coefficient $\beta_{j,\perp}$ measures the predictive relevance of this residual information for $Y$. Therefore, the optimal selection prioritizes aspects that combine high residual variation (uncertainty relative to AI) and strong predictive relevance.

% \[
% \E[Y] = \E[\theta_{\pi}^T \hat X_{\pi}] = \theta_{\pi}^T \E[\hat X_{\pi}], 
% \]
% therefore, the optimal subset $\pi$ selected according to \eqref{eq:R_linear_1} is also equivalent to 
% \begin{equation*}
%      \pi^* = \argmin_{\pi} \left(R_{\pi} - \E[Y]^2 \right)
%    = \argmin_{\pi}\; \theta_{\pi}^T \;\Var(\hat X_{\pi}) \; \theta_{\pi}
% \end{equation*}
% \begin{equation*}
% \begin{aligned}
%     \\
%     & = \argmin_{\pi} \gamma_{\pi}^T \Var(A) \gamma_{\pi} + 2 \gamma_{\pi}^T \
% \end{aligned}
% \end{equation*} 

% Under a query budget $n_{\mathrm{sel}}$, we aim to reduce the Bayes risk given in \eqref{eq:bayes_risk}.
% As in \eqref{eq:max_explained_var}, this is equivalent to maximizing the explained variance
% \begin{equation}
% \label{eq:equiv_max_explained}
% \pi^\star \in \arg\max_{|\pi| \leq n_{\mathrm{sel}}}\ 
% \mathrm{Var}\!\left(\mathbb{E}\!\left[Y\mid A, H_\pi\right]\right) \coloneq \arg\max_{|\pi| \leq n_{\mathrm{sel}}} R_\pi.
% \end{equation}

\subsubsection{Estimate of the selection criterion} \label{sec:selection-criterion-estimate}
In our experiment, we deploy the non-adaptive selection rule derived under the linear assumption. 
We use the formula given in \eqref{eq:R_linear_1}
\[
R_{\pi}^\text{(lin)} = \theta_{\pi}^T \E\bigl[ \hat X_{\pi} \hat X_{\pi}^T \bigl] \theta_{\pi},
\]
to estimate the reward function and select the optimal subset. In order to do so, we first estimate the coefficient $\theta_{\pi}$ and $\E\bigl[ \hat X_{\pi} \hat X_{\pi}^T \bigl]$ and calculate the reward function $R_{\pi}^\text{(lin)}$.
Still, let $\{(Y_i,A_i,H_{i,\pi})\}_{i=1}^{N_R}$ denote the available training data for $R_{\pi}^\text{(lin)}$, then 
\begin{equation}
\label{eq:Rpi_lin_estimator}
\hat R_{\pi}^\text{(lin)} =  \hat\theta^T_{\pi} \left(\frac{1}{N_R}\sum_{i = 1}^{N_R} \hat X_{\pi,i} \hat X_{\pi,i}^T \right)\hat\theta_{\pi},
\end{equation}
where $\hat{\theta}_{\pi}$ is obtained via running ridge regression on $(A_i, H_{i,\pi}) \to Y$, as we mentioned in \eqref{eq:theta_pi_estimate}.

% \textcolor{red}{To be modified according to the $\hat{\theta}_{\pi}$ and $\hat{X}_{\pi}$.}
% Assume we are given access to a training dataset of size $n$ with data $A^{(n)} \in \mathbb{R}^{n \times d_A}$, $H^{(n)} \in\mathbb{R}^{n \times d_H}$, and target signal $Y^{(n)} \in \mathbb{R}^n$.
% This dataset may also be used for fitting a downstream model mapping $(A, H_\pi) \to Y$.
% For fixed $\pi$, we wish to assign an estimate of $R^\text{(lin)}_{\pi, i}$, $i \in [n]$ to each sample of the training set, and produce a selection rule to apply to test samples.

% To ensure proper generalization of the downstream model, we use $K$-fold cross validation to fit $R^\text{(lin)}_\pi$, and assign out-of-fold predictions to each training sample.
% For test samples, we refit the criterion on the full training set.
% In particular,
% {\small
% \begin{equation}
%     \hat{R}^\text{(lin)}_\pi = \hat{\gamma}(\pi)^\top \widehat{\mathrm{Var}}(A) \hat{\gamma}(\pi) + 2 \sum_{j \in \pi} \hat{\gamma}(\pi)^\top \widehat{\mathrm{Cov}}\left(A^{(n)}, H_j^{(n)}\right) \hat{\beta}_j + \sum_{i, j \in \pi \times \pi} \hat{\beta}_i^\top\widehat{\mathrm{Cov}}\left(H^{(n)}_i, H^{(n)}_j\right) \hat{\beta}_j,
% \end{equation}}
% where $\hat{\gamma}(\pi)$ and $\hat{\beta}_j$ are  estimates of $\tilde{\gamma}(\pi)$ and $\tilde{\beta}_j$ computed via ridge regression models on the $k$\textsuperscript{th} training fold, and $\widehat{\mathrm{Cov}}$, $\widehat{\mathrm{Var}}$ are sample covariance and variance, respectively.

\subsubsection{Asymptotic normality of the estimate}

The estimate of the criterion $\hat{R}^\text{(lin)}_\pi$ satisfies asymptotic normality: for fixed $\pi$, $\sqrt{N_{R}} \left(\hat{R}^\text{(lin)}_\pi - R^\text{(lin)}_\pi\right) \stackrel{d}{\to} \mathcal{N}(0, \sigma^2)$,
where the asymptotic variance $\sigma^2$ is a function of various covariances between $A$, $H$, and $Y$.

The result is standard; the key insight is that the estimated reward $\hat{R}^\text{(lin)}_\pi$ is purely a function of $\hat{S}$, the joint sample covariance matrix of $(A, H, Y)$.
We first use the central limit theorem to state the asymptotic normality of $\hat{S}$.
As all components of $\tilde{\gamma}$ and $\tilde{\beta}$ are obtained via ordinary least squares, all terms of $\hat{R}^\text{(lin)}_\pi$ are differentiable functions of $\hat{S}$, making $\hat{R}^\text{(lin)}_\pi = f(\hat{S})$ for differentiable function $f$.
Then, the delta method can be applied to finish the proof.
See Appendix \ref{appdx:asymp_normality} for full details.

\section{Practical Selection Rules}
\label{sec:practical_select_rules}

We therefore consider the following selection rules.

In the previous sections, we reduced the problem of solving for the optimal
selection rule to estimating the reward function $R_\pi(\cdot)$ (or reward value $R_{\pi}$).
In principle, given a selection budget $n_{\mathrm{sel}}$, one would compute
\[
\arg\max_{\pi:\,|\pi|\le n_{\mathrm{sel}}} R_\pi.
\]
However, the number of candidate subsets grows combinatorially in $J$ and
$n_{\mathrm{sel}}$, making exhaustive search infeasible in general.

Even when $R_\pi$ admits a closed-form expression, maximizing it over subsets
of fixed cardinality remains computationally challenging. In the linear case,
this optimization can be viewed as a variant of the weighted densest-$k$
subgraph problem, which is NP-hard in general \cite{chang2020densest}.
Thus, directly computing the reward for each possible subset is typically intractable for large
$J$ or $n_{\mathrm{sel}}$.
When $J$ is small (e.g., $J=10$ in our experiments), brute-force evaluation
is still feasible; otherwise, scalable approximations are required.

While the optimal rule requires combinatorial optimization over subsets, we now introduce tractable approximations motivated by the structure of the reward.
\begin{enumerate}[1.]
    \item \textit{Singleton-based selection.}
A simple approach applicable to both the nonparametric and linear settings is
to estimate the singleton rewards $R_j \coloneq R_{\{j\}}$ and select aspects
in decreasing order of $R_j$ until the budget $n_{\mathrm{sel}}$ is exhausted.
This rule ignores interactions between aspects but is computationally efficient.

    \item \textit{Greedy marginal selection.}
To partially account for interactions, we adopt a forward greedy procedure.
First select
\[
j_1^\ast = \arg\max_{j\in[J]} R_{\{j\}},
\]
and then iteratively select
\[
j_t^\ast = \arg\max_{j\notin\{j_1^\ast,\dots,j_{t-1}^\ast\}}
R_{\{j_1^\ast,\dots,j_{t-1}^\ast,j\}},
\]
until $|\pi|=n_{\mathrm{sel}}$.
This balances tractability with the ability to capture interactions.

\item \textit{Importance-based selection (baseline).}
Under the linear setting, the reward admits the representation
\[
R_\pi \;\propto\;
\beta_{\pi,\perp}^\top \Var(H_{\pi,\perp}) \beta_{\pi,\perp},
\]
which shows that selection is driven by two components:
(i) the predictive importance of each human aspect, captured by the coefficient
$\beta_{j,\perp}$, and
(ii) the residual variability of that aspect after orthogonalization with respect to $A$, captured by $\Var(H_{j,\perp})$.

This decomposition motivates a simple importance-based baseline.
We first fit a linear model of $Y$ on $(A, H_{\perp})$,
and then rank aspects according to
$\|\beta_{j,\perp}\|_2$,
the magnitude of the coefficient associated with the orthogonalized human signal $H_{j,\perp}$.

\item \textit{Agreement-based selection (a heuristic baseline).}
In our experiments, we also consider a strategy that queries the human on the aspects where the AI appears to be most unreliable.  

This baseline is motivated by the intuition that, if a downstream prediction model $F(\cdot): H \to Y$ were already trained to operate on full human features $H$, then the error induced by inputting $X_\pi$ could be controlled by
\[
    |F(H)-F(X_\pi)| \;\leq\; L_F\,\|X_\pi-H\|,
\]
for some Lipschitz constant $L_F$.  
Under such a premise, it makes sense to choose 
\[
\pi^\ast = \min_{\pi:\,|\pi|\leq \ell} \; \|A_{\pi^C}-H\|,
\]which means to query a human on those aspects where the AI and human disagree the most.

% Because in our setting no pretrained model $F(H)$ is available and the methods proposed in Section \ref{sec:selection_rule} optimize Bayes risk directly, this heuristic is inherently mismatched to the decision problem; we therefore expect it to be dominated by our risk-based selection rules.

\section{Peer Review Case Study}
\label{sec:peer_review}
As a concrete case study, we apply our selection methodology to AI-assisted peer review.
In recent years, reviewers have increasingly relied on LLMs to generate draft assessments or scores for papers, but prior work has shown that such outputs are often noisy, overly optimistic, and inconsistent with human judgment \citep{pangram2025, fanous2025syceval, liang2024feedback}.
This setting allows us to study how human effort can be selectively allocated to correct AI-generated evaluations, and to evaluate the practical benefits of different selection rules.

\subsection{Problem setting}
Specifically, the target variable $Y$ is the paper's average reviewer score, and $A,H$ consist of AI-generated and human-generated evaluations of the paper for a list of $J=10$ \textit{aspects}.
An \textit{aspect} is a characteristic of a paper, such as theoretical validity, novelty, or experimental design, that may be used in peer review.
The aspects were inspired by the reviewer guidelines for the 2025 NeurIPS and ICML conferences; see Table \ref{tb:aspect_def} in the Appendix for the full list.
For each aspect, $j$, we collect the following information:
\begin{enumerate}[1.]
    \item $H_{j, 0} \in \{0, 1\}$: whether or not the reviewer deems the aspect relevant to determining the final evaluation of the paper,  
    \item $H_{j, 1} \in [0, 1]$: a score of the paper with respect to the aspect, where $1$ denotes highest quality.
    If $H_{j,0} = 0$, this is set to $0.5$.
    \item $H_{j, 2}$: a \textit{logic reason} that helps explain the score given in $H_{j, 1}$.
    For simplicity, the reviewer is asked to select one logic reason out of a pre-defined list for each aspect, the specifics of which are given in Section \ref{appdx:aspect_list} of the Appendix.
    $H_{j, 2}$ is converted to a $d_\text{logic}$-dimensional numerical quantity via a low-dimensional projection of an embedding model output.\footnote{e.g., in our experiments, we perform PCA on the space of logic codes from aspects and use the top two principal components as the projection matrix.}
\end{enumerate}
For the AI-generated features $A_j$, we request a score for every aspect and thus collect a score $A_{j, 0}$ and a logic reason $A_{j, 1}$.
As a result, $H \in \mathbb{R}^{J(2 + d_\text{logic})}$ and $A \in \mathbb{R}^{J(1 + d_\text{logic})}$.

In this empirical study, the aspect-level human evaluations $H$ are not obtained through direct queries to human reviewers.
Instead, they are constructed by applying an LLM-based extractor to the textual reviews written by human reviewers.
The resulting aspect-level signals should therefore be interpreted as a structured proxy for the reviewer’s underlying factor-level reasoning.
% Because review text is itself part of the same evaluation process that produces the final score, these proxy features are likely more strongly correlated with the target than aspect-level annotations obtained independently from the paper.
% Accordingly, the empirical study should be viewed as an offline proxy evaluation of the selection rules rather than a literal deployment simulation of human querying.

\subsection{Application of selection rules and fitting}\label{sec:application_selection_rules}

For reward-based selection rules, we first estimate the reward function and compute the selection decision for every data point, and then train the final prediction model based on the AI-signals and the selected human features. 

To ensure that both the training and test sets use out-of-sample selection decisions, we adopt a cross-fitting procedure: the $N$ training examples are partitioned into $k_{\text{out}}$ folds, and the selection rule applied to each fold is trained on the remaining folds -- meaning $N_{R}= (k_{\text{out}} - 1) N/ k_{\text{out}}$. 
This guarantees that the selection decisions used for model fitting are always computed out-of-sample.

For selection rules that require estimation of the pseudo-outcome $\phi_{i,\{j\}}$ in \eqref{eq:phi_pi_orth}, we perform an additional $k_{\text{in}}$-fold cross-fitting step within each outer training fold, as explained in subsection~\ref{subsec:estimate_Rpi_nonparam}.

After applying the selection rule, we construct the observed feature vector $X$ from the AI predictions $A$ and the selected human information $H$ as in \eqref{eq:construct_X} for the training of the prediction model. 
For aspects not selected for a given sample, the missing human features must be imputed.

Although the AI scores $A$ provide estimates of $H$, they are biased, and directly substituting $A$ may degrade performance. 
Instead, we train an auxiliary model to approximate $\mathbb{E}[H \mid A]$ and use its predictions to impute the unobserved human attributes.

\section{Experiments}
\label{sec:experiments}
\subsection{Implementation details}
\subsubsection{Data collection}
We collected 3400 papers from the International Conference on Learning Representations (ICLR) 2019 and 2021. 
All records were retrieved using the \texttt{Python} interface of the OpenReview API.
For each submission, we collected the manuscript metadata, reviewer reports, and the official decision provided on OpenReview.
PDFs are converted to text through the \texttt{pypdf} tool, and we use regex to remove indicators of author names, author affiliation, and acceptance state (``in review,'' ``accepted,'' etc.) from the paper text.\footnote{For cases where \texttt{pypdf} failed, we used \texttt{pdfminer}, which performs optical character recognition.
}
We omit ICLR 2020 because the COVID-19 pandemic led to deviations from the standard review workflow, making it not directly comparable to other years.

For each paper, we obtain the AI evaluations $A$ by prompting the LLM with the paper title, year, abstract, and full text (truncated at 100k characters to accommodate context-length limits). 
The model is asked to return, for each aspect, (i) a score in $[0,1]$ and (ii) a textual logic reason (from a pre-specified dictionary) justifying the score. 
The aspect definitions and logic-reason guidelines provided to the LLM are described in Appendix~\ref{appdx:aspect_list}.
All prompts are recorded in Appendix \ref{appdx:prompts}.

To construct the human-derived features $H$, we process each individual review using an LLM as a representative. 
Given the review text together with the paper title, year, and the same aspect and logic-reason definitions, the model is asked to annotate: 
(1) whether the reviewer discussed the aspect, 
(2) if so, the score the reviewer would assign to that aspect, and 
(3) the corresponding logic reason.
We do not feed the full paper to the LLM.

For the \emph{agreement baseline}, we provide an LLM with both the human review text and the previously generated LLM annotations for that review. 
The model is then asked to output: 
(1) whether the reviewer mentioned the aspect, and 
(2) conditional on mention, the degree of agreement with the LLM assessment on a scale from 0 to 1.

For our experiments, we use GPT-5-mini, GPT-4.1, and Gemini-2.0 Flash.\footnote{Model selection was constrained by available compute budget.} 
% All prompts request outputs in JSON format, which we parse using \texttt{pydantic}. 
% The parser validates that (i) all required outputs are present, (ii) logic reasons correspond to entries in our predefined dictionary and (iii) all scores lie within the required range. 
% When a model response fails validation, we issue automatic retries (pre-pending the retry number to the prompt) to the until a valid output is obtained or five attempts are reached.

To construct a per-paper dataset, we average the human features $H$ across reviewers; for the agreement baseline, we analogously average the agreement scores $S$. 
Averaging is performed to obtain a single target and feature representation per paper, consistent with the paper-level prediction task.
Binary indicators $H_{j,0}$ and $S_{j,0}$ are converted to integers (via thresholding at $0.5$) after aggregation. 

\subsubsection{Selection rules and regression model}
\label{subsec:selection_rules_exp}

We consider five selection rules from Section~\ref{sec:selection_rule} and Section~\ref{sec:practical_select_rules}, ordered from most to least complex.

\paragraph{1. $R_j(A)$: Non-parametric, adaptive rule. }
We implement the singleton-based selection rule using the estimated non-parametric reward function from section~\ref{subsec:estimate_Rpi_nonparam}. 
Within each remaining fold, we estimate $\phi_{i,\{j\}}$ via inner cross-fitting and then train a model mapping $Z_i$ to $\phi_{i,\{j\}}$ to obtain $\hat{R}_j(z)$.

\paragraph{2. $R_j (A\text{-indep.})$: Non-parametric, non-adaptive rule.}
This is a non-contextual variant of the above rule, where $\hat{R}_j$ is set to the average of $\phi_{ij}$ over the training data instead of fitting a model from $Z$ (see \eqref{eq:Rpi_nonparam_nonadaptive}).

\paragraph{3. $R_\pi^\text{lin}$: Linear setting, optimal rule.}
We calculate the reward function given in \eqref{eq:Rpi_lin_estimator} for all subsets of $[J]$ and select the optimal one. 

\paragraph{4. $R_j^\text{lin}$: Linear setting, singleton-based.}
This rule only calculates the reward $\hat{R}^{(\text{lin})}_{\{j\}}$ for each aspect and applies the singleton selection strategy.

\paragraph{5. $\beta_{j, \perp}$ Linear regression, importance-based.}
This rule selects aspects based solely on their relevance in a one-time linear regression, measured by $\lVert \beta_{j, \perp} \rVert_2$.

\paragraph{6. Learned Agree.} 
For this heuristic baseline, we hope to select the aspects where AI and human disagree the most. In practice, instead of measuring $\|A_j - H_j\|$, we prompted a LLM with LLM review and human review to generate aspect-level signals:  
a relevance indicator $S_{j,0}\in\{0,1\}$ specifying whether the aspect is pertinent to the review, and  
an agreement score $S_{j,1}\in[0,1]$ measuring how strongly the human agrees with the AI on that aspect.  
% When an aspect is marked as irrelevant ($S_{j,0}=0$), we impute a neutral human representation
% \[
%     H_{j,0}=0,\qquad  
%     H_{j,1}=0.5,\qquad  
%     H_{j,2}=  (0 \dots 0).
% \]
% (with the embedding for $H_{j,2}$ set to the zero vector) to construct $X$.  
% Among the relevant aspects ($S_{j,0}=1$), the baseline selects those with the lowest agreement scores $S_{j,1}$ for querying.
%, aiming to minimize $\|A_\pi-H\|$ in accordance with the rationale above.

As the agreement scores already contain information about human review and in order to avoid human-information leakage, we \textit{learn a mapping from $A$ to $S$} and instead use these ``learned agreement scores'' to select attributes for human query.
\end{enumerate}

To assess robustness to data partitioning and random seeds, we generate 20 random train/test splits with a test ratio of 20\%. 
For models with internal randomness (e.g., \texttt{XGBoost}), we fix the random seed to match the seed used for the corresponding data split.

To model the mapping $(A, H_\pi) \!\to\! Y$, where $Y$ denotes the average reviewer score per paper, we consider \texttt{XGBoost} as well as scikit-learn implementations of ridge regression, random forests, gradient boosting, and multi-layer perceptrons (MLPs).
Unless otherwise stated, default values of hyperparameters are used.
For learning the selection rules, we employ ridge regression with regularization strength $\alpha = 1$ whenever a regression model is required. 
All cross-validation procedures use 5 folds.

% Finally, we emphasize that the empirical procedures should be interpreted as
% \emph{theoretically motivated heuristics}.
% The theoretical analysis characterizes the population-level reward structure under idealized modeling assumptions,
% whereas the empirical implementation relies on estimated nuisance functions, simplified model classes, and tractable subset-selection procedures.
% The selection rules evaluated in this section therefore operationalize the theoretical insights in finite samples,
% but they should not be viewed as exact implementations of the population-optimal dispatching policy.

\subsection{Results}

We begin by examining the performance of AI-only predictions and the impact of adding selective human input. 
When we prompt GPT-5-mini directly for a final score, the resulting mean absolute error (MAE) is \textbf{1.9063}. 
If we instead fit a machine learning model on the AI-generated aspect-level evaluations $A$, the test MAE drops substantially to around \textbf{0.7} (see the blue dashed line in Figure~\ref{fig:main_MAE_plots}). 
However, once we incorporate even a single human query using a principled selection rule, the MAE further decreases to below \textbf{0.55}. 
This sharp improvement demonstrates the importance of human intervention: even a small amount of selectively acquired human information can significantly improve decision quality over AI-only predictions.

For all LLMs and regression models, we evaluate the five selection rules described in section~\ref{subsec:selection_rules_exp}, the learned agreement-based selection baseline, and three reference baselines that use the full feature sets: LLM features only ($A$), human features only ($H$), and the combined features ($(A,H)$). 
For each train-test split, we report test-set mean absolute error (MAE), root mean squared error (RMSE), and coefficient of determination ($R^2$). 
Results for all three LLMs under ridge regression and gradient boosting (the highest-performing nonlinear model overall) are shown in Figure~\ref{fig:main_MAE_plots}.

\begin{figure}[htbp]
    \centering
    \includegraphics[width=1\linewidth]{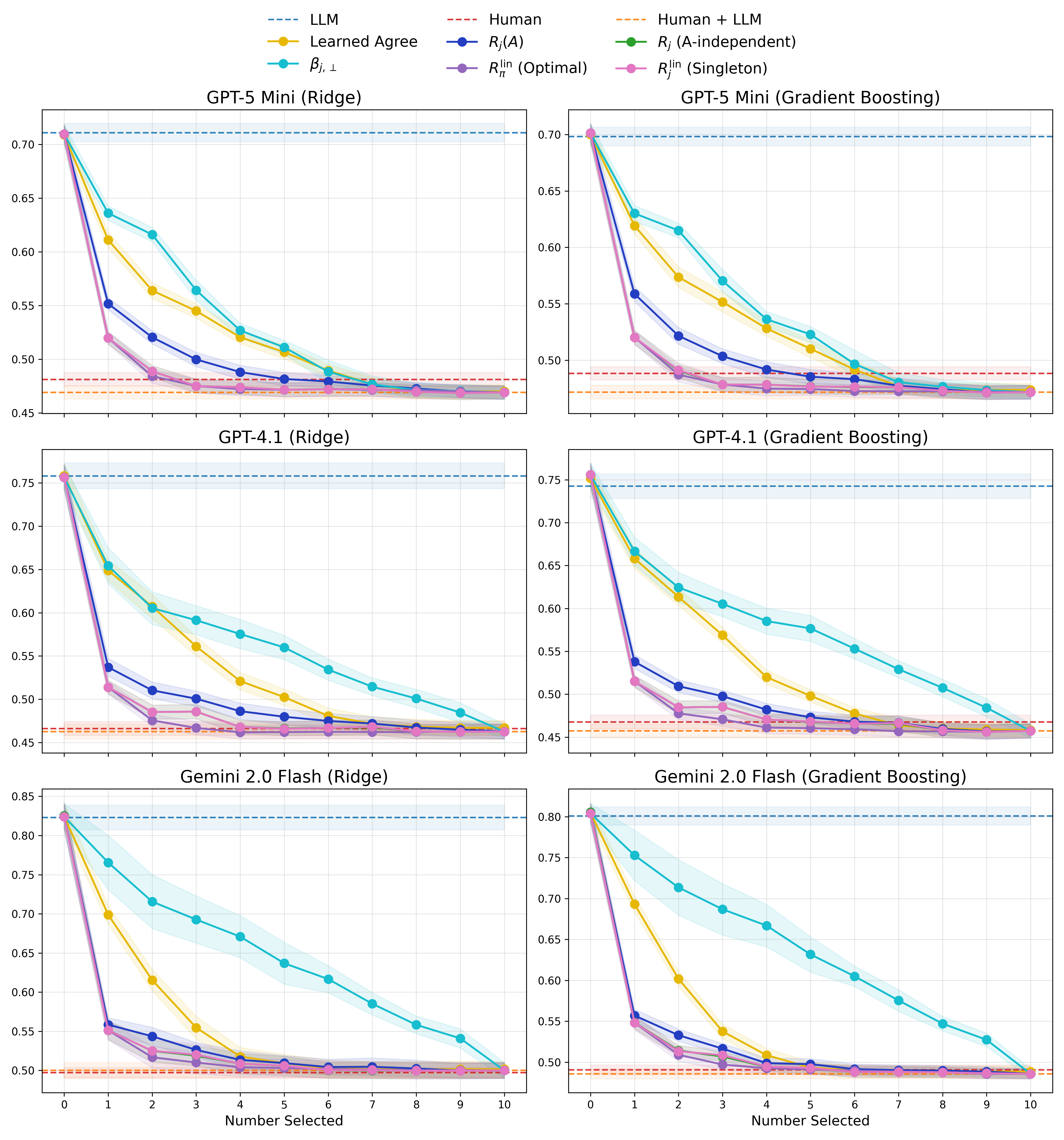}
    \caption{MAE of estimated score, plotted with respect to number of aspects selected for human query.
    For each number of selected aspects, the mean metric value over 20 different training and test splits is plotted, with 95\% confidence intervals shown as shaded regions.
    As selection-free baselines, the performance of models trained on $A$ (``LLM''), $H$ (``Human''), and $(A, H)$ (``Human + LLM'') are plotted as dashed lines.
    For each LLM, results with a linear prediction model (ridge regression) are plotted on the left and a nonlinear model (gradient boosting) are plotted on the right.
    See Table \ref{tab:average_metric_values} for more regression models.}
    \label{fig:main_MAE_plots}
\end{figure}

Table~\ref{tab:average_metric_values} reports, for each model and LLM combination, the average performance of the five selection rules. 
Each cell reflects the average test metric for $n_{\text{sel}}\leq 5$ (as performance largely converges within that regime) averaged over all 20 random train-test splits.

\begin{table}[tbp]
    \centering
    \scriptsize
\setlength{\tabcolsep}{2.9pt}
\begin{tabular}{@{}l|lll|lll|lll|lll|lll@{}}
\multicolumn{16}{c}{Metrics: MAE ($\downarrow$), RMSE ($\downarrow$), R2 ($\uparrow$)} \\
\toprule
Regression Model / & \multicolumn{3}{c|}{$\beta_{j,\perp}$} & \multicolumn{3}{c|}{$R_j(A)$} & \multicolumn{3}{c|}{$R_j$ (A-independent)} & \multicolumn{3}{c|}{$R_\pi^{\text{lin}}$ (Optimal)} & \multicolumn{3}{c}{$R_j^{\text{lin}}$ (Singleton)} \\
Language Models & MAE & RMSE & R2 & MAE & RMSE & R2 & MAE & RMSE & R2 & MAE & RMSE & R2 & MAE & RMSE & R2 \\
\midrule
GB / GPT-5-mini & 0.596 & 0.749 & 0.515 & 0.544 & 0.684 & 0.592 & 0.524 & 0.660 & 0.619 & \textbf{0.523} & \textbf{0.658} & \textbf{0.621} & 0.524 & 0.660 & 0.619 \\
Rand. Forest & 0.610 & 0.766 & 0.493 & 0.550 & 0.693 & 0.582 & 0.533 & 0.671 & 0.606 & \textbf{0.533} & \textbf{0.670} & \textbf{0.607} & 0.533 & 0.671 & 0.606 \\
MLP & 0.601 & 0.757 & 0.504 & 0.554 & 0.699 & 0.574 & 0.528 & 0.666 & 0.611 & \textbf{0.526} & \textbf{0.664} & \textbf{0.614} & 0.528 & 0.666 & 0.611 \\
XGBoost & 0.598 & 0.752 & 0.511 & 0.547 & 0.688 & 0.587 & 0.529 & 0.666 & 0.611 & \textbf{0.528} & \textbf{0.665} & \textbf{0.612} & 0.529 & 0.666 & 0.611 \\
Ridge & 0.594 & 0.750 & 0.512 & 0.542 & 0.689 & 0.586 & 0.523 & 0.661 & 0.616 & \textbf{0.522} & \textbf{0.659} & \textbf{0.618} & 0.523 & 0.661 & 0.616 \\
\midrule
GB / GPT-4.1 & 0.636 & 0.804 & 0.459 & 0.543 & 0.690 & 0.593 & 0.530 & 0.670 & 0.614 & \textbf{0.523} & \textbf{0.662} & \textbf{0.622} & 0.530 & 0.670 & 0.614 \\
Rand. Forest & 0.646 & 0.820 & 0.439 & 0.550 & 0.700 & 0.582 & 0.537 & 0.679 & 0.605 & \textbf{0.531} & \textbf{0.671} & \textbf{0.612} & 0.537 & 0.679 & 0.605 \\
MLP & 0.640 & 0.808 & 0.454 & 0.550 & 0.697 & 0.586 & 0.535 & 0.673 & 0.611 & \textbf{0.527} & \textbf{0.664} & \textbf{0.620} & 0.534 & 0.673 & 0.612 \\
XGBoost & 0.637 & 0.809 & 0.452 & 0.553 & 0.704 & 0.577 & 0.541 & 0.683 & 0.599 & \textbf{0.533} & \textbf{0.675} & \textbf{0.607} & 0.540 & 0.683 & 0.599 \\
Ridge & 0.624 & 0.789 & 0.478 & 0.545 & 0.696 & 0.586 & 0.529 & 0.668 & 0.616 & \textbf{0.523} & \textbf{0.660} & \textbf{0.624} & 0.529 & 0.668 & 0.616 \\
\midrule
GB / Gemini-2.0 & 0.709 & 0.889 & 0.316 & 0.568 & 0.723 & 0.539 & 0.560 & 0.711 & 0.552 & \textbf{0.557} & \textbf{0.707} & \textbf{0.557} & 0.560 & 0.712 & 0.552 \\
Rand. Forest & 0.721 & 0.905 & 0.292 & 0.577 & 0.734 & 0.526 & 0.568 & 0.721 & 0.541 & \textbf{0.566} & \textbf{0.719} & \textbf{0.544} & 0.569 & 0.722 & 0.540 \\
MLP & 0.721 & 0.902 & 0.297 & 0.576 & 0.730 & 0.530 & 0.567 & 0.718 & 0.545 & \textbf{0.562} & \textbf{0.712} & \textbf{0.552} & 0.567 & 0.717 & 0.546 \\
XGBoost & 0.716 & 0.900 & 0.299 & 0.575 & 0.731 & 0.528 & 0.566 & 0.719 & 0.542 & \textbf{0.564} & \textbf{0.716} & \textbf{0.546} & 0.567 & 0.720 & 0.541 \\
Ridge & 0.718 & 0.952 & 0.178 & 0.579 & \textbf{0.771} & \textbf{0.454} & 0.572 & 0.783 & 0.407 & \textbf{0.568} & 0.775 & 0.420 & 0.572 & 0.784 & 0.402 \\
\bottomrule
\end{tabular}
    \caption{Average performance of different selection rules, averaged over all splits for up to five aspects selected for human query (As all methods seem to converge after $\approx\!5$ aspects, we examine the first five to highlight the differences between rules.)
    Each horizontal section corresponds to a different LLM, with different regression models. 
    For each metric and model, the top method (up to 4 decimal places) is bolded.
    In the Models column, GB refers to gradient boosting, and ``Gemini 2.0'' is \texttt{Gemini 2.0 Flash}.}
    \label{tab:average_metric_values}
\end{table}

\paragraph{Effect of selective human queries.}
Across all models and LLMs, incorporating even a small number of human queries leads to substantial performance gains over the AI-only baseline. 
For the two GPT models, most selection rules reach the human-only baseline after querying only 2-3 aspects. 
For Gemini, where the AI annotations are less accurate overall, approximately 4-5 aspects are required to match or approach the human baseline. 
In all cases, the error curves in Figure~\ref{fig:main_MAE_plots} show rapid improvement in the first few queries, followed by diminishing returns, indicating that only a small subset of aspects carries most of the decision-relevant information.

\paragraph{Comparison of selection rules.}
Overall, the two rules derived under the linear assumption perform best, with no substantial differences among them. 
As shown in Table~\ref{tab:average_metric_values}, the singleton linear rule attains the strongest average performance. 
Importantly, these linear rules achieve near-human performance using only a small number of selected aspects, demonstrating the efficiency of principled human dispatching.

The $A$-independent non-parametric selection rule performs comparably to the linear rules, albeit slightly worse in some settings (the green line is hard to see as it is covered by the pink line). 
In contrast, the $A$-dependent non-parametric rule performs noticeably worse than its $A$-independent counterpart. 
With a relatively limited sample size (in total 3{,}408 papers), conditioning the selection on $A$ introduces substantial estimation variance, which offsets the potential benefits of context-dependent selection.

As expected, the two baselines -- agreement-based rule and importance-based rule -- underperform all risk-minimization-based rules. The agreement-based rule heuristically prioritizes aspects where the LLM disagrees most with reviewers rather than those that are most informative for predicting $Y$ under Bayes risk.

\paragraph{Role of parametric structure.}
The strong performance of $R_j^{\text{lin}}$ and $R_\pi^{\text{lin}}$ suggests that simple parametric structures achieve a favorable trade-off between model flexibility and estimation accuracy in this setting. While the nonparametric rules impose weaker assumptions and are therefore more broadly applicable, they incur higher estimation error in limited-data regimes. In contrast, the linear rules yield more stable estimates of the selection criterion, leading to consistently strong performance across models and LLMs.

\begin{figure}[h]
\centering
\includegraphics[width=0.85\linewidth]{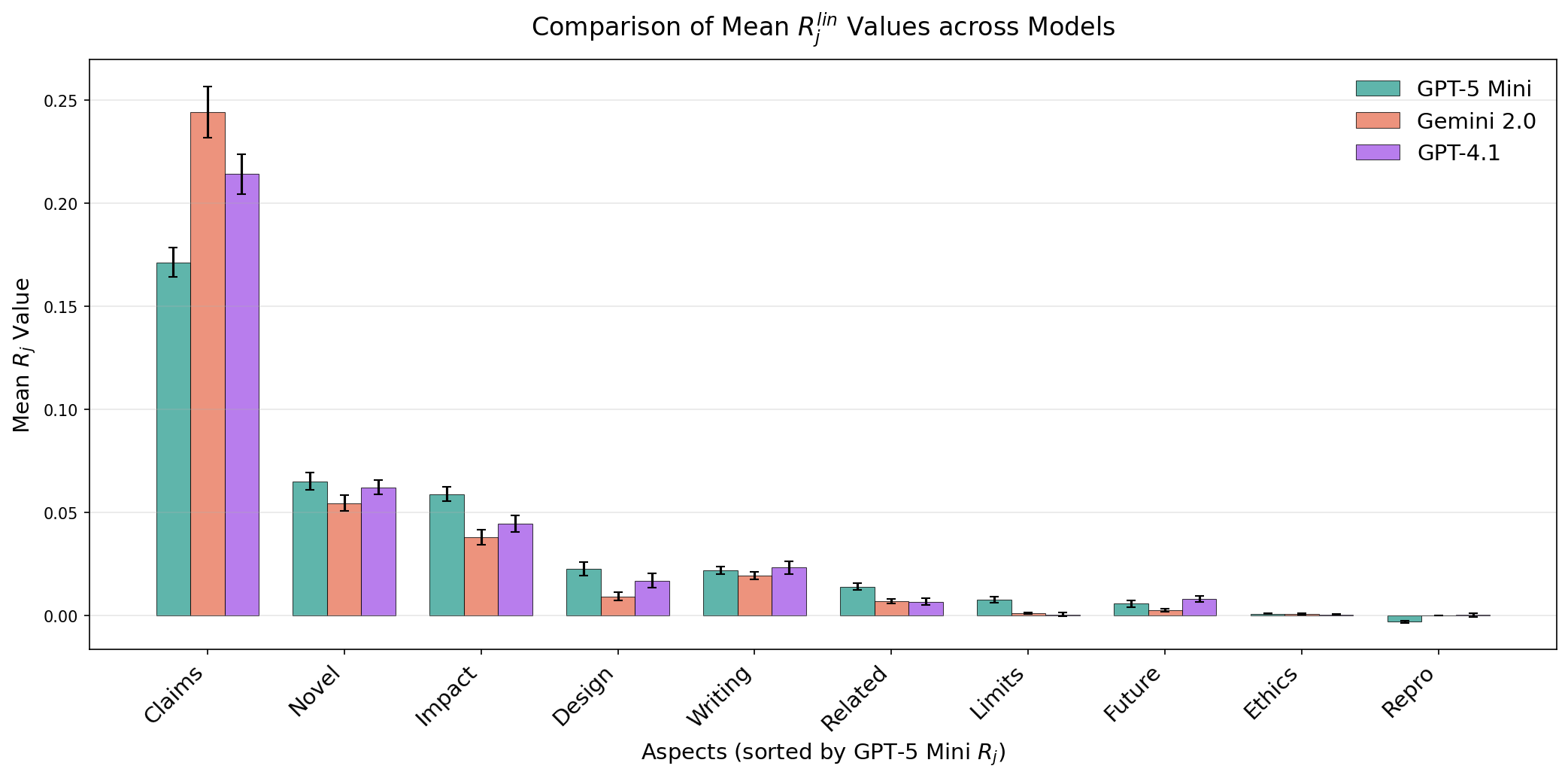}
    \caption{The two most selected aspects are \textit{Claims\_Evidence\_Rigor} and \textit{Originality\_Novelty}.}
    \label{fig:ranking}
\end{figure}

\paragraph{Interpretation and limitations.}
Our empirical target is to predict a reviewer’s overall evaluation from a combination of paper-derived AI signals and structured aspect-level signals extracted from that same reviewer’s textual review. This setup is appropriate for studying how much of the predictive content of a reviewer’s full evaluation can be recovered from a selected subset of factor-level human signals. However, because the human-side features are extracted from complete review text rather than elicited prospectively through partial queries, the experiment should be interpreted as an offline approximation to human dispatching rather than a literal simulation of the deployment protocol.

\section{Conclusion}

We introduced a general framework for information acquisition in decision problems where AI systems provide scalable but imperfect, unstructured inputs and human judgments are accurate but costly. Our approach characterizes how human effort should be dispatched based on its marginal value for the final decision. Under squared error loss, we showed that optimal acquisition policies admit a clear structure and can be estimated with simple parametric models offering particularly effective guidance in practice.

We illustrated the framework in AI-assisted peer review, demonstrating that selectively querying human evaluations according to principled acquisition rules can achieve decision quality comparable to full human review at a fraction of the cost. From a policy perspective, our results suggest that AI systems should not be deployed as autonomous decision makers in high-stakes settings, but rather as information intermediaries whose outputs guide the targeted allocation of scarce human oversight. This perspective clarifies how institutions can incorporate AI assistance while preserving accountability and decision quality.

More broadly, the framework provides a foundation for studying settings in which biases in AI outputs are endogenous, for example arising from feedback loops, strategic training incentives, or deployment choices. It also naturally extends to environments with strategic human evaluators, where effort, reporting, or attention may respond to incentives and anticipated queries. Incorporating such strategic interactions into the information acquisition problem is a promising direction for understanding the equilibrium design of human-AI decision systems in legal review, medical evaluation, and other high-stakes domains.

\newpage
\bibliographystyle{ACM-Reference-Format}
\bibliography{references}

\newpage
\appendix
\section{reward function estimation using two-layer regression}

\paragraph{Proof of Theorem~\ref{thm:orth_rate_main}.}
Define the regression noise
\[
\varepsilon := Y - m_\pi(X_{\pi}),
\qquad
\mathbb{E}[\varepsilon \mid X_{\pi}]=0,
\]
and the first-stage estimation error
\[
\delta(X_{\pi}) := \widehat m_\pi(X_{\pi}) - m_\pi(X_{\pi}).
\]
On the holdout sample used for the second stage, expand the pseudo-outcome:
\begin{align*}
\widehat\phi_{i,\pi}
&=
2Y_i(m_\pi(X_{i, \pi})+\delta(X_{i, \pi})) - (m_\pi(X_{i, \pi})+\delta(X_{i, \pi}))^2 \\
&=
\phi_{\pi}(X_{i, \pi}) + 2\delta(X_{i, \pi})\varepsilon_i - \delta(X_{i, \pi})^2.
\end{align*}
Because of cross-fitting, $\delta(X_{i, \pi})$ is independent of $(Y_i,A_i,H_{i,\pi})$.
Taking conditional expectation given $(X_{\pi},\widehat m_{\pi})$ (we hereby omit the index $i$ for simplicity) yields
\[
\mathbb{E}[\widehat\phi_{\pi} \mid X_{\pi},\widehat m_\pi]
=
m_\pi(X_{\pi})^2 - \delta(X_{\pi})^2.
\]
Conditioning further on $(Z,\widehat m_{\pi})$ gives
\[
\mathbb{E}[\widehat\phi_{\pi} \mid Z,\widehat m_{\pi}]
=
R_\pi(Z) - b_{\widehat m_{\pi}}(Z),
\qquad
b_{\widehat m_{\pi}}(z)
:=
\mathbb{E}[\delta(X_{\pi})^2 \mid Z=z,\widehat m_{\pi}].
\]

Let $\mu_{\widehat m_\pi}(z):=\mathbb{E}[\widehat\phi_{\pi}\mid Z=z,\widehat m_\pi]$.
Then
\[
\widehat R_\pi - R_\pi
=
(\widehat R_\pi - \mu_{\widehat m_\pi}) - b_{\widehat m_\pi}.
\]
By the assumed oracle property of the second-stage regression,
\[
\|\widehat R_\pi - \mu_{\widehat m_\pi}\|_{2,Z}
=
O_{\mathbb{P}}(r_{g,N}).
\]
Moreover,
by Jensen's inequality (conditional form),
\[
b_{\widehat m_{\pi}}(Z)^2
=
\big(\mathbb{E}[\delta(X_{\pi})^2\mid Z,\widehat m_{\pi}]\big)^2
\le
\mathbb{E}[\delta(X_{\pi})^4\mid Z,\widehat m_{\pi}].
\]
Taking the expectation over $(Z,\widehat m_{\pi})$ yields
\[
\norm{b_{\hat m}}_{2,Z}^2
=
\mathbb{E}[b_{\hat m}(Z)^2]
\le
\mathbb{E}[\delta(X_{\pi})^4]
=
\norm{\delta}_{4,X_{\pi}}^4.
\]
Therefore,
\[
\|b_{\widehat m_\pi}\|_{2,Z}
\le
\|\delta\|_{4,X_{\pi}}^2
=
O_{\mathbb{P}}(r_{m,n}^2).
\]
Combining the two bounds and applying the triangular inequality, 
\[
\|\widehat R_\pi - R_\pi\|_{2, Z} \leq \|b_{\widehat m_\pi}\|_{2,Z} + \|\widehat R_\pi - \mu_{\widehat m_\pi}\|_{2,Z}
\]
proves \eqref{eq:thm1_orth_rate}.

In the special case where $R_\pi$ is constant in $Z$, the second-stage regression
step is replaced by empirical averaging. 
Conditioning on the first-stage estimates produced by cross-fitting, the
pseudo-outcomes $\{\widehat\phi_{i,\pi}\}_{i=1}^N$ are independent with finite
second moment; therefore, by the central limit theorem,
\[
\frac{1}{N}\sum_{i=1}^N \widehat\phi_{i,\pi}
-
\mathbb{E}[\widehat\phi_{\pi}]
=
O_{\mathbb{P}}(N^{-1/2}),
\]
 while the bias induced by the first-stage estimation remains of order
$r_{m,n}^2$ by the same orthogonality argument. Therefore, we conclude with \eqref{eq:thm1_aver_rate}.
\hfill $\square$

\newcommand{\Ebb}{\mathbb{E}}
\section{Proof: Asymptotic Normality of Linear Model Reward}\label{appdx:asymp_normality}

For this section, assume the linear setting described in Section \ref{subsec:select_linear} with fixed human subset $\pi$, and define $M$ as the vector produced by vertically stacking $A$, $H_\pi$, and $Y$.
Further, define $X \coloneq (A, H_\pi)$.

WLOG, assume that $M$ is zero-mean, as the mean can be incorporated into the intercept.
Additionally, to avoid special handling for the constant $C$, we augment $A$ with a $1$ and thus incorporate $C$ into $\gamma$.

Define $\Sigma$ as $\Ebb[M M^\top]$, and define sample covariance 
\[\hat{S} \coloneq \frac{1}{n} \sum_{k \in [n]} M^{(n)}_k \left(M^{(n)}_k\right)^\top.\]
In this section, we will use, e.g., $\Sigma_{A, H}$ to denote the slice corresponding to the covariance between $A$ and $H$.
Define ``expander operation'' $P$ such that $P_{A, H} \mathrm{vec}(\Sigma_{A, H})$ is equal to $\mathrm{vec}(\Sigma)$, with all but the $\Sigma_{A, H}$ component zeroed out.

We now state the main asymptotic normality result:
\begin{theorem}[Asymptotic normality of $\hat{R}^\text{(lin)}_\pi$]
The estimated selection criterion under a linear model satisfies
\[\sqrt{n}\left(\hat{R}^\text{(lin)}_\pi - R^\text{(lin)}_\pi\right) \stackrel{d}{\to} \mathcal{N}(0, \sigma^2),\qquad \sigma^2 = \alpha^\top \mathrm{Var}(\mathrm{vec}(MM^\top)) \alpha,\]
with $\alpha$ given by
\begin{align*}
   \alpha &= 2 P_{X,Y} \hat{\theta}(\pi) - P_{X,X}(\hat{\theta}(\pi)^\top \otimes \Sigma_{X,X}^{-1})^\top \Sigma_{X,Y}  .
\end{align*}

% {\sc Proof Sketch.}
% \textup{
    
% }

\begin{proof}
    We first rewrite $\hat{R}^\text{(lin)}_\pi$ as
    \begin{align*}
        \hat{R}^\text{(lin)}_\pi &= f(\hat{S}) \coloneq \hat{\theta}(\pi)^\top \hat{S}_{X,X} \hat{\theta}(\pi) = \hat{S}_{X,Y}^\top \hat{S}_{X,X}^{-1} \hat{S}_{X,X} \hat{S}_{X,X}^{-1} \hat{S}_{X,Y} \\
        &=  \hat{S}_{X,Y}^\top \hat{S}_{X,X}^{-1} \hat{S}_{X,Y} = \hat{\theta}(\pi)^\top \hat{S}_{X,Y}.
    \end{align*}

    As such $f(\hat{S})$ is a continuous function of $\hat{S}$, and it can be easily verified that $f(\Sigma)= R_\pi^\text{(lin)}$ (the asymptotic value of $\hat{R}_\pi^\text{(lin)}$, so we can apply the delta method to evaluate $\sqrt{n}(\hat{R}^\text{(lin)}_\pi - R^\text{(lin)}_\pi)$.
    
    By the central limit theorem, 
    \[\sqrt{n}\,\mathrm{vec}(\hat{S} - \Sigma) \stackrel{d}{\to} \mathcal{N}(0, \Gamma),\quad \Gamma \coloneq \mathrm{Var}(\mathrm{vec}(MM^\top)).\]

    So, by the first-order delta method,
    \[\sqrt{n}(\hat{R}^\text{(lin)}_\pi - R^\text{(lin)}_\pi) \stackrel{d}{\to} \mathcal{N}(0, (\nabla f(\Sigma))^\top \Gamma (\nabla f(\Sigma))),\]
    where we take $\nabla f(\Sigma)$ to be the gradient of $f$ with respect to the vectorized version of $\Sigma$.

    Now, all that remains is to compute $\nabla f(\Sigma)$.
    % Define notation $\partial \hat{\gamma}(\pi)$ to represent the derivative of $\hat{\gamma}(\pi)$ with respect to $\mathrm{vec}(\hat{S})$, evaluated at $\Sigma$.
    
    % Writing $f(\Sigma)$ in terms of the vectorized $\Sigma$,
    % \[f(\Sigma) = (\hat{\theta}(\pi) \otimes  \hat{\theta}(\pi))^\top \Sigma_{X,Y}.\]
    Evaluating the product rule,
    \begin{align*}
        \nabla f(\Sigma) &= P_{X,Y} \hat{\theta}(\pi) + \left(\tfrac{\partial}{\partial \Sigma} \hat{\theta}(\pi)^\top \right)  \Sigma_{X,Y}.
    \end{align*}
    As $\hat{\theta}(\pi) = \hat{S}_{X,X}^{-1} \hat{S}_{X,Y}$,
    \begin{align*}
        \tfrac{\partial}{\partial\Sigma} \hat{\theta} &= \tfrac{\partial}{\partial\Sigma} \left(\Sigma_{X,X}^{-1}\Sigma_{X,Y}\right) = -((\Sigma_{X,Y}^\top \Sigma_{X,X}^{-\top}) \otimes \Sigma_{X,X}^{-1}) P_{X,X}^\top  + (\Sigma_{X,X}^{-1}) P_{X,Y}^\top \\
        &= -(\hat{\theta}(\pi)^\top \otimes \Sigma_{X,X}^{-1})  P_{X,X}^\top  + (\Sigma_{X,X}^{-1}) P_{X,Y}^\top.
    \end{align*}
    Putting it together,
    \begin{align*}
        \nabla f(\Sigma) &= P_{X,Y} \hat{\theta}(\pi) + \left(-P_{X,X}(\hat{\theta}(\pi)^\top \otimes \Sigma_{X,X}^{-1})^\top + P_{X,Y} \Sigma_{X,X}^{-\top} \right) \Sigma_{X,Y} \\
        &= 2 P_{X,Y} \hat{\theta}(\pi) - P_{X,X}(\hat{\theta}(\pi)^\top \otimes \Sigma_{X,X}^{-1})^\top \Sigma_{X,Y} 
    \end{align*}
\end{proof}
    
\end{theorem}
\section{Examples of Regression Rates}
\label{app:rates}

For a random vector $X$ with distribution $P_X$ and $p \ge 1$, we define
\[
\|f\|_{p,X} := \bigl( \mathbb{E}[|f(X)|^p] \bigr)^{1/p}.
\]
Note that $\|f\|_{2,X} \le \|f\|_{4,X}$ by H\"older's inequality.
Thus, an $L^4$ convergence rate is strictly stronger than an $L^2$ rate and implies
control over higher moments of the estimation error.
Such conditions are commonly imposed in orthogonal or debiased machine learning
to ensure that second-order remainder terms are asymptotically negligible.

The convergence rates in Assumptions~\ref{ass:firststage_main}
and~\ref{ass:secondstage_main} are compatible with a wide range of classical
regression estimators.
In parametric models, $\sqrt{n}$-type rates are attainable.
In nonparametric settings, the rates depend on the smoothness of the regression
function and the dimension of the covariates, reflecting the curse of dimensionality
\cite{tsybakov2009introduction,gyorfi2002distribution}.

\begin{table}[h!]
\centering
\caption{Representative regression rates for Assumptions~\ref{ass:firststage_main}--\ref{ass:secondstage_main}}
\label{tab:rates}
\begin{tabular}{lllll}
\toprule
Target & Function class & Dimension & Norm & Rate \\
\midrule
$m_\pi(A,H_\pi)$ & Linear / GLM & fixed & $L^4$ & $n^{-1/2}$ \\
$m_\pi(A,H_\pi)$ & Sparse linear & $p$ & $L^4$ & $\sqrt{\tfrac{s\log p}{n}}$ \\
$m_\pi(A,H_\pi)$ & H\"older $C^s$ & $d_X$ & $L^4$ & $n^{-\frac{s}{2s+d_X}}$ \\
$m_\pi(A,H_\pi)$ & RKHS & eff.\ dim. & $L^4$ & $n^{-\frac{\alpha}{2\alpha+1}}$ \\
$\mu(Z)$ & Linear / GLM & $d_Z$ & $L^2$ & $N^{-1/2}$ \\
$\mu(Z)$ & Sparse linear & $d_Z$ & $L^2$ & $\sqrt{\tfrac{s\log d_Z}{N}}$ \\
$\mu(Z)$ & H\"older $C^s$ & $d_Z$ & $L^2$ & $N^{-\frac{s}{2s+d_Z}}$ \\
$\mu(Z)$ & Additive model & $d_Z$ & $L^2$ & $N^{-\frac{s}{2s+1}}$ \\
$\mu(Z)$ & Single-index & $d_Z$ & $L^2$ & $N^{-\frac{s}{2s+1}}$ \\
\bottomrule
\end{tabular}
\end{table}

In the nonparametric regression model with $s$-smooth regression functions over
$\mathbb{R}^{d_Z}$, the minimax risk under squared $L^2$ loss satisfies
\[
\inf_{\widehat\mu} \sup_{\mu \in \mathcal H^s}
\mathbb E \|\widehat\mu - \mu\|_{2,Z}^2
\asymp N^{-2s/(2s+d_Z)},
\]
which implies the $L^2$ norm convergence rate
$\|\widehat\mu - \mu\|_{2,Z} = O_{\mathbb P}(N^{-s/(2s+d_Z)})$
\citep{stone1982optimal,tsybakov2009introduction}.

Dimension-free $N^{-1/2}$ rates arise only under parametric assumptions or when
the regression function depends on the covariates through a sufficiently
low-dimensional structure, such as additive or single-index models.

Table~\ref{tab:rates} summarizes typical convergence rates for the sequences
$r_{m,n}$ and $r_{g,N}$ under standard assumptions from statistical learning theory.

\newpage
\section{Implementation Details}
\subsection{List of Aspects and Logic Reasons}\label{appdx:aspect_list}
Table \ref{tb:aspect_def} shows the list of aspects that are used as prediction features in the peer review case study.

\newcolumntype{Y}{>{\RaggedRight\arraybackslash}X}
\newcommand{\stackcell}[1]{%
  \begin{tabular}[t]{@{}l@{}}#1\end{tabular}%
}

\begin{table}[htbp]
\centering
\scriptsize
\begin{tabularx}{\linewidth}{@{}lY@{}}
\toprule
\textbf{Aspect} & \textbf{Definition and Prompt} \\
\midrule

\stackcell{
Claims\\Evidence\\Rigor
} &
\textit{Definition}: Correctness of proofs and adequacy of empirical support for each stated claim.

\textit{Prompt}: List the paper’s main claims and judge whether each is supported by (a) correct proofs (cite lemmas you checked) and (b) adequate empirical evidence (baselines, ablations, statistics); note any mismatches between claims and evidence. \\

\addlinespace

\stackcell{
Clarity\\Writing
}&
\textit{Definition}: Readability, organization, clear definitions, consistent notation, and comprehensible experimental descriptions.

\textit{Prompt}: Is the exposition easy to follow? For theory, are all terms/assumptions defined and symbols consistent? For experiments, are dataset/task definitions, metrics, hyperparameters, seeds, and training details specified so an expert could reproduce key results? \\

\addlinespace
\stackcell{Significance\\Impact}
 &
\textit{Definition}: Importance of the problem and expected influence on research or practice. 

\textit{Prompt}: Who would use these ideas, and how would they change research or practice? Does the work tackle an important task better than prior art or materially advance understanding? \\

\addlinespace
\stackcell{Originality\\Novelty} &
\textit{Definition}: Newness of insights, analysis, tasks, methods, or well-reasoned combinations, and clarity versus prior work. 

\textit{Prompt}: What’s genuinely new (insights, analysis, task, method, or a well-reasoned combination), and is the difference vs.\ prior work explicit and well-cited? \\

\addlinespace

\stackcell{
Related Work\\Positioning
}  &
\textit{Definition}: Accuracy and completeness of positioning relative to essential prior and concurrent work. 

\textit{Prompt}: Does the paper correctly place itself in context, citing essential prior or concurrent work and explaining concrete differences to avoid overstating novelty? \\

\addlinespace

\stackcell{
Reproducibility\\Transparency
} &
\textit{Definition}: Availability and sufficiency of artifacts and procedural detail to reproduce key results. 

\textit{Prompt}: Are artifacts/details sufficient to reproduce the main results (configs, code/data access or alternatives, hardware/compute, preprocessing)? If some assets can’t be shared, do authors provide a viable path for verification? \\

\addlinespace

\stackcell{
Experimental\\Design\\Robustness
} &
\textit{Definition}: Soundness of evaluation and stability of results to randomness, hyperparameters, and distributional changes.

\textit{Prompt}: Are evaluation protocols sound (data splits, metrics, baselines), analyses appropriate (variance, statistical tests), and results stable across seeds/hyperparameters or plausible shifts? \\

\addlinespace

Limitations &
\textit{Definition}: Explicitness and realism of limitations, assumptions, and failure modes, with reasonable mitigations. 

\textit{Prompt}: Are scope limits, assumptions, and failure modes explicit, and are proposed mitigations or cautions reasonable? What critical omissions should be added? \\

\addlinespace

\stackcell{
Ethics\\Societal\\Impact
}&
\textit{Definition}: Potential concerns, compliance, and mitigation related to bias/fairness, privacy/security, misuse, and IRB. 

\textit{Prompt}: Could the work enable harm (bias/discrimination, privacy/security, misuse, legal non-compliance)? Are risks, approvals, and mitigations documented; if not, what should be added? \\

\addlinespace

\stackcell{
Future Work\\Potential
}&
\textit{Definition}: Degree to which the work opens meaningful avenues for follow-up research or practical extensions. 

\textit{Prompt}: Does the work open promising avenues for follow-up research? \\

\bottomrule
\end{tabularx}
\caption{List of aspects used to extract features for predicting the average reviewer score of a paper.}
\label{tb:aspect_def}
\end{table}

For each aspect, the logic reasons from which the human and AI reviewers can select are displayed in the table below:

{\scriptsize
\begin{longtable}{ll}
% \caption{Logic reasons for review dimensions, for each aspect.}
% \label{tab:logic-reasons} \\

\toprule
\textbf{Code} & \textbf{Logic reason} \\
\midrule
\endfirsthead

\toprule
\textbf{Code} & \textbf{Logic reason} \\
\midrule
\endhead

\midrule
\multicolumn{2}{r}{\textit{Continued on next page}} \\
\endfoot

\bottomrule
\endlastfoot

% ===================== ClaimsEvidenceRigor =====================
\multicolumn{2}{@{}l}{\textbf{ClaimsEvidenceRigor} (all codes prefixed by \texttt{rigor\_})} \\
\midrule

\texttt{supported\_theory\_and\_empirics} &
Each primary claim is backed by correct proofs and adequate empirical evidence. \\

\texttt{theory\_ok\_empirics\_incomplete} &
Theoretical proofs correct for the stated assumptions, empirical evidence incomplete. \\

\texttt{empirics\_strong\_no\_theory} &
Empirical evidence strong (baselines, statistics), but theoretical support is absent. \\

\texttt{proof\_gaps\_undermine\_claims} &
Key proofs contain gaps or unverified steps that undermine the corresponding claims. \\

\texttt{claims\_overstated\_vs\_results} &
Claim wording overstates results relative to what the experiments actually demonstrate. \\

\texttt{insufficient\_baselines\_ablations} &
Baselines or ablations are insufficient to substantiate superiority claims. \\

\texttt{statistics\_inadequate} &
Statistical analysis is inadequate to support the reported differences. \\

\texttt{details\_missing\_unverifiable} &
Evidence is not verifiable due to missing experimental details. \\

\texttt{small\_sample\_generalization\_weak} &
Results rely on small samples and are not convincingly generalizable. \\

\texttt{evidence\_well\_matched\_to\_claims} &
Claims are well-matched to the presented evidence without overreach. \\

\midrule
% ===================== ClarityWriting =====================
\multicolumn{2}{@{}l}{\textbf{ClarityWriting} (all codes prefixed by \texttt{clarity\_})} \\
\midrule

\texttt{clear\_structure\_and\_defs} &
The paper is well organized with clear definitions and consistent notation. \\

\texttt{missing\_or\_late\_definitions} &
Important terms are undefined or introduced late, hindering comprehension. \\

\texttt{ambiguous\_notation} &
Notation is inconsistent or overloaded, causing ambiguity. \\

\texttt{figures\_tables\_helpful} &
Figures and tables are informative and improve readability. \\

\texttt{dense\_exposition\_needs\_edit} &
The exposition is dense and would benefit from restructuring or illustrative examples. \\

\texttt{language\_issues\_minor} &
Language issues are minor and do not impede understanding. \\

\texttt{repro\_details\_scattered} &
Reproduction-critical details are scattered and hard to locate. \\

\texttt{appendix\_crossrefs\_effective} &
Appendix and main text are cross-referenced effectively. \\

\midrule
\multicolumn{2}{@{}l}{\textbf{SignificanceImpact} (all codes prefixed by \texttt{impact\_})} \\
\midrule
\texttt{important\_problem\_clearly\_addressed} &
The work addresses an important problem with clear impact on research or practice. \\

\texttt{incremental\_limited} &
Improvements are incremental with limited expected impact. \\

\texttt{theoretical\_insight\_advances} &
Theoretical insights substantially advance understanding of the topic. \\

\texttt{niche\_audience\_limited\_reach} &
The contribution primarily benefits a narrow niche community. \\

\texttt{potential\_to\_change\_practice} &
If validated at scale, the method could materially change practice. \\

\texttt{speculative\_claims} &
Impact claims are speculative and not well supported. \\

\texttt{foundational\_for\_future\_work} &
The work provides a strong foundation for subsequent developments. \\

\texttt{application\_relevance\_compelling} &
Application relevance is compelling and well argued. \\

\midrule
\multicolumn{2}{@{}l}{\textbf{OriginalityNovelty} (all codes prefixed by \texttt{novelty\_})} \\
\midrule

\texttt{new\_algorithm} &
This work proposes a new algorithm for a known problem and justifies its design. \\

\texttt{new\_task\_definition} &
This work defines a new problem or task with clear motivation and evaluation protocol. \\

\texttt{new\_theoretical\_framework} &
This work offers a new theoretical framework or analysis that yields fresh insights. \\

\texttt{novel\_combination\_well\_reasoned} &
This work combines existing techniques in a novel, well-reasoned way. \\

\texttt{minor\_modifications\_only} &
Novelty is limited to minor modifications or tuning. \\

\texttt{comparative\_study\_new\_insights} &
A careful comparative study yields new insights about existing methods. \\

\texttt{missing\_citations\_undermine} &
Novelty claims are undermined by missing or incomplete citations. \\

\texttt{replication\_with\_new\_findings} &
This is a replication or reproduction study with novel, informative findings. \\

\midrule
\multicolumn{2}{@{}l}{\textbf{RelatedWorkPositioning} (all codes prefixed by \texttt{rw\_})} \\
\midrule

\texttt{comprehensive\_and\_clear\_differences} &
Related work is comprehensive and differences are articulated clearly. \\

\texttt{key\_prior\_work\_missing} &
Key prior work is missing or insufficiently discussed. \\

\texttt{concurrent\_work\_acknowledged} &
Concurrent work is acknowledged and properly contrasted. \\

\texttt{novelty\_overstated\_vs\_prior} &
The paper overstates novelty relative to prior art. \\

\texttt{covers\_adjacent\_non\_ml\_literature} &
Connections to adjacent or non-ML literature are appropriately covered. \\

\texttt{mischaracterizes\_prior\_work} &
Prior work is mischaracterized or compared unfairly. \\

\texttt{survey\_helpful\_context} &
The survey of the area is accurate and helpful to readers. \\

\texttt{differences\_vague} &
Citations are present but differences remain vague. \\

\midrule
\multicolumn{2}{@{}l}{\textbf{ReproducibilityTransparency} (all codes prefixed by \texttt{repr\_})} \\
\midrule

\texttt{code\_data\_available\_documented} &
Code and data are available with sufficient documentation to reproduce results. \\

\texttt{partial\_artifacts\_enable\_partial\_repro} &
Partial artifacts are shared, enabling partial reproduction. \\

\texttt{no\_artifacts\_but\_detailed\_procedure} &
No artifacts are provided, but procedures and hyperparameters are detailed. \\

\texttt{critical\_details\_missing} &
Critical details (configs, seeds, preprocessing) are missing for reproduction. \\

\texttt{compute\_hardware\_reported} &
Compute and hardware details are reported transparently. \\

\texttt{seed\_hparam\_reporting\_inadequate} &
Seed and hyperparameter reporting is inadequate for stability checks. \\

\texttt{provenance\_and\_licensing\_documented} &
Dataset provenance and licensing are documented. \\

\texttt{restricted\_access\_with\_verification\_path} &
Data or models have restricted access but a verification path is described. \\

\midrule
\multicolumn{2}{@{}l}{\textbf{ExperimentalDesignRobustness} (all codes prefixed by \texttt{edr\_})} \\
\midrule

\texttt{sound\_protocols\_strong\_baselines} &
Evaluation uses sound protocols and strong, relevant baselines. \\

\texttt{missing\_key\_baselines} &
Key baselines are missing or outdated. \\

\texttt{poor\_metric\_choice} &
Metrics are ill-suited to the stated objectives. \\

\texttt{variance\_across\_seeds\_analyzed} &
Variance across random seeds is analyzed and acceptable. \\

\texttt{hparam\_sensitivity\_reported} &
Sensitivity to hyperparameters is reported and reasonable. \\

\texttt{ood\_shift\_robustness\_tested} &
Robustness to distribution shift is tested and encouraging. \\

\texttt{ablations\_thorough\_and\_informative} &
Ablation studies are thorough and informative. \\

\texttt{cherry\_picking\_concern} &
Results selection appears cherry-picked or under-specified. \\

\midrule
\multicolumn{2}{@{}l}{\textbf{Limitations} (all codes prefixed by \texttt{lim\_})} \\
\midrule

\texttt{explicit\_with\_mitigations} &
Limitations are explicit with reasonable mitigations. \\

\texttt{superficial\_or\_generic} &
Limitations are superficial or generic. \\

\texttt{failure\_modes\_characterized} &
Failure modes are characterized with illustrative cases. \\

\texttt{scope\_clearly\_bounded} &
Scope is clearly bounded to stated assumptions and regimes. \\

\texttt{data\_bias\_acknowledged} &
Data or sampling bias is acknowledged. \\

\texttt{compute\_constraints\_limit\_applicability} &
Compute or resource constraints limit applicability. \\

\texttt{safety\_or\_deployment\_caveats} &
Safety or deployment caveats are clearly stated. \\

\texttt{important\_limitations\_omitted} &
Important limitations are omitted. \\

\midrule
\multicolumn{2}{@{}l}{\textbf{EthicsSocietalImpact} (all codes prefixed by \texttt{eth\_})} \\
\midrule

\texttt{risks\_identified\_and\_mitigated} &
Potential risks are identified with concrete mitigations. \\

\texttt{irb\_or\_consent\_addressed} &
IRB/consent or human-subjects considerations are addressed. \\

\texttt{privacy\_security\_not\_adequate} &
Privacy or security implications are not adequately considered. \\

\texttt{bias\_fairness\_evaluated} &
Bias and fairness are evaluated with appropriate metrics. \\

\texttt{dual\_use\_discussed} &
Dual-use or misuse scenarios are discussed. \\

\texttt{licensing\_and\_legal\_compliance} &
Licensing and legal compliance are documented. \\

\texttt{significant\_concern\_flag} &
Significant ethical concerns warrant further review. \\

\texttt{minimal\_risk\_given\_setting} &
Ethical risks are minimal given the problem setting. \\

\midrule
\multicolumn{2}{@{}l}{\textbf{FutureWorkPotential} (all codes prefixed by \texttt{fwp\_})} \\
\midrule

\texttt{opens\_new\_directions} &
The work opens clear new research directions. \\

\texttt{tools\_or\_datasets\_for\_followup} &
It provides tools or datasets that enable follow-up studies. \\

\texttt{path\_to\_applications} &
There is a plausible path to impactful applications. \\

\texttt{limited\_by\_scope} &
Future work is limited by narrow scope or constraints. \\

\texttt{testable\_hypotheses\_and\_next\_steps} &
The paper articulates testable hypotheses and concrete next steps. \\

\texttt{roadmap\_for\_scaling\_or\_generalizing} &
A roadmap for scaling or generalizing is outlined. \\

\texttt{extensions\_to\_theory\_or\_benchmarks} &
Extensions to theory or benchmarks are natural and compelling. \\

\texttt{resources\_may\_limit\_progress} &
Future work relies on resources unlikely to be broadly accessible. \\

\end{longtable}
}

\subsection{Prompts}\label{appdx:prompts}

The user prompt for the LLM review task (generation of $A$ features) is in Figure \ref{fig:llm_review_prompt}.
The user prompt for extracting $H$ features from the review text is in Figure \ref{fig:human_review_prompt}.
The user prompt for generating agreement scores is in Figure \ref{fig:agreement_prompt}.
System prompts for all three tasks are in Figure \ref{fig:system_prompts}.

\begin{figure}[htbp]
    \centering
    \begin{tcolorbox}[
        width=0.92\linewidth,
        colback=gray!5,
        colframe=gray!50,
        boxrule=0.5pt,
        arc=2pt,
        left=6pt,
        right=6pt,
        top=6pt,
        bottom=6pt
    ]
    \scriptsize %\ttfamily
TITLE: \{\{TITLE\}\}

YEAR: \{\{YEAR\}\}

ABSTRACT\_OR\_TEXT:

\{\{TEXT\}\} \\

ASPECTS (name: definition):
\{\{ASPECTS\_BLOCK\}\} \\

LOGIC\_DICTIONARIES (per aspect, choose exactly one option for each):
\{\{LOGIC\_BLOCK\}\} \\

REFERENCE\_CUTOFF\_YEAR: \{\{REF\_YEAR\}\}. The paper was written in \{\{REF\_YEAR\}\}, so you can only use references on or before this year. \\

SCORING\_RUBRIC: 0.00=strongly negative; 0.25=weak; 0.50=mixed; 0.75=positive; 1.00=strongly positive. \\

TASK:

Work aspect by aspect, in the given order. For EACH aspect listed, produce exactly one assessment with:
  - score: float in [0,1]
  - logic\_code: the chosen logic option code
  - detailed: $\leq$200 words, grounded in the paper text and the reference year
After assessing all of the aspects, do the following:
   - Give the paper an overall score on the scale of 1 to 10 (based on how strongly you believe the paper should be accepted or rejected).
  - Provide a Y (integer): 1 if you recommend acceptance, 0 if you recommend rejection. \\

For the overall score, you can use the following anchor points (but your score can fall in between anchor points): \\

OVERALL SCORE ANCHORS:
- 1: Trivial or wrong
- 2: Strong rejection
- 3: Clear rejection
- 4: Ok but not good enough - rejection
- 5: Marginally below acceptance threshold
- 6: Marginally above acceptance threshold
- 7: Good paper, accept
- 8: Top 50% of accepted papers, clear accept
- 9: Top 15% of accepted papers, strong accept
- 10: Top 5% of accepted papers, seminal paper

RETURN ONLY THIS JSON SCHEMA:
\{
  "overall\_score": <float>,
  "Y": <1 for accept, 0 for reject>,
  "<AspectName1>": \{
      "score": <float>,
      "logic\_code": "<CODE>",
      "detailed": "<string>"
    \}, 
    "<AspectName2>": \{
      "score": <float>,
      "logic\_code": "<CODE>",
      "detailed": "<string>"
    \}, ...(one entry per aspect)
\}

IMPORTANT: within the "detailed" field, and ONLY within the "detailed" field, ONLY use SINGLE QUOTES or ESCAPED DOUBLE QUOTES \" if you must use quotation marks in your explanation.
    \end{tcolorbox}
    \caption{Prompt for collecting LLM reviews}
    \label{fig:llm_review_prompt}
\end{figure}

\begin{figure}[htbp]
    \centering
    \begin{tcolorbox}[
        width=0.92\linewidth,
        colback=gray!5,
        colframe=gray!50,
        boxrule=0.5pt,
        arc=2pt,
        left=6pt,
        right=6pt,
        top=6pt,
        bottom=6pt
    ]
    \scriptsize 
    
TITLE: \{\{TITLE\}\}

YEAR: \{\{YEAR\}\} \\

ASPECTS (name: definition):

\{\{ASPECTS\_BLOCK\}\} \\

LOGIC\_DICTIONARIES (per aspect; choose exactly one code):

\{\{LOGIC\_BLOCK\}\} \\

REVIEW\_TEXT: \{\{REVIEW\_TEXT\}\}

REVIEWER: \{\{REVIEW\_ID\}\} \\

Scoring guidance for a.score (when aspect is discussed):
Use any real value in [0,1]. Anchors (for intuition, not discretization):
0.00 strongly negative; 0.25 weak; 0.50 mixed/uncertain; 0.75 positive; 1.00 strongly positive.

Task (for each aspect in the ASPECTS list, in the given order):

	1.	Find sentences that most directly discuss that aspect.
    
		- If the aspect is NOT discussed: set s0 = 0, set score = 0.50 (neutral default), and set logic\_code = "NOT\_MENTIONED".
        
		- If discussed: set s0 = 1. \\
        
	2.	Provide the assessment using only REVIEW\_TEXT.
    
		- Set score to any float in [0,1] per the guidance.
        
		- Set logic\_code to one code from the logic dictionary for that aspect (do not invent new codes). \\

Return ONLY this JSON schema (no extra fields):
\{
	"<AspectName1>": \{
		"s0": 0 | 1,
		"score": <float in [0, 1]>,
		"logic\_code": "<[CODE]>"
	\},
	"<AspectName2>": \{
		"s0": 0 | 1,
		"score": <float in [0, 1]>,
		"logic\_code": "<[CODE]>"
	\}, ...(one entry per aspect)
\}

    \end{tcolorbox}
    \caption{Prompt for tagging the human reviews to generate human features $H$.}
    \label{fig:human_review_prompt}
\end{figure}

\begin{figure}[htbp]
    \centering
    \begin{tcolorbox}[
        width=0.92\linewidth,
        colback=gray!5,
        colframe=gray!50,
        boxrule=0.5pt,
        arc=2pt,
        left=6pt,
        right=6pt,
        top=6pt,
        bottom=6pt
    ]
    \scriptsize
PAPER:

TITLE: \{\{TITLE\}\}

YEAR: \{\{YEAR\}\} \\

LLM\_ASPECT\_INPUT (array):

\{\{LLM\_REVIEWS\}\}

// Each item: \{"aspect": "<name>", "llm\_answer": "<logic\_sentence>$\backslash$n $\backslash$n<detailed>"\} \\

REVIEW\_TEXT: \{\{REVIEW\_TEXT\}\}

REVIEWER: \{\{REVIEW\_ID\}\} \\

Scoring anchors for s1 (when aspect is mentioned):

- 1.00: Explicit agreement; human text clearly supports key points in llm\_answer.

- 0.75: Mostly agrees; minor caveats.

- 0.50: Mixed/unclear; both supportive and conflicting cues, or too vague.

- 0.25: Mostly disagrees; major contradictions.

- 0.00: Clear disagreement. \\

Task (for each item in LLM\_ASPECT\_INPUT, in order): \\

1. Find sentences that most directly discuss that aspect in REVIEW\_TEXT.

   - If NOT discussed: s0 = 0; s1 = 0.50 (neutral default).

   - If discussed: s0 = 1 and continue.

2. If the aspect was discussed, assess agreement using only REVIEW\_TEXT.

   - Set s1 via anchors. \\

Strict output schema: \\

\{

  "AspectName1": \{

    "s0": 0 | 1,

    "s1": <float>,

  \},

  "AspectName2": \{

    "s0": 0 | 1,

    "s1": <float>,

  \}, ...(one entry per aspect)
  
\}

    \end{tcolorbox}
    \caption{Prompt for generating agreement scores between the LLM and human reviews.}
    \label{fig:agreement_prompt}
\end{figure}

\begin{figure}[htbp]
    \centering
    \begin{tcolorbox}[
        width=0.92\linewidth,
        colback=gray!5,
        colframe=gray!50,
        boxrule=0.5pt,
        arc=2pt,
        left=6pt,
        right=6pt,
        top=6pt,
        bottom=6pt
    ]
    \scriptsize

    \textbf{LLM review}: You are a careful meta-review assistant. Base judgments ONLY on the provided paper text. 
If evidence is missing, say so and reflect uncertainty in the score. 
Return ONLY JSON. No markdown fences, no extra text. \\

\textbf{Human review}: You are a human-review proxy rater. Judge ONLY according to the REVIEW\_TEXT.
Do not infer beyond what the reviewer actually wrote. If an aspect is not discussed or evidence is ambiguous, say so and follow the scoring rules.
Output STRICT JSON; no extra keys or prose. \\

\textbf{Agreement}: 
You are an agreement adjudicator. Judge ONLY according to the HUMAN\_REVIEWS text. 
Do not infer beyond what the human reviewers actually wrote. If an aspect is not discussed or evidence is ambiguous, say so and follow the scoring rules. 
Output STRICT JSON; no extra keys or prose.
    
    \end{tcolorbox}
    \caption{System prompts for all tasks.}
    \label{fig:system_prompts}
\end{figure}

\end{document}